\newcommand{\thesistitle}{Towards Robust Out-of-Distribution Generalization:\\Data Augmentation and Neural Architecture Search Approaches
}
\newcommand{\thesisauthor}{Haoyue Bai}
\newcommand{\programname}{Computer Science and Engineering}
\newcommand{\departmentname}{Department of Computer Science and Engineering}
\newcommand{\thesisdate}{May 2022}
\newtheorem{theorem}{Theorem}
\newcommand{\etal}{\textit{et al. }}
\DeclareMathAlphabet{\mathcal}{OMS}{cmsy}{m}{n}
\newcommand{\ie}{\textit{i}.\textit{e}.} 
\newcommand{\name}{DecAug}
\newtheorem{lemma}{Lemma}
\begin{document}

\pagenumbering{roman}
\pagestyle{plain}
\setcounter{page}{1}
\addcontentsline{toc}{chapter}{Title Page}

\thispagestyle{empty}
\null\vskip0.5in
\begin{center}
    {\Large \thesistitle}
  \vfill
  \vspace{20mm}

  by

  \vspace{4mm}

  \thesisauthor \\
  \vfill
  \vspace{20mm}

  A Thesis Submitted to\\
  The Hong Kong University of Science and Technology \\
  in Partial Fulfillment of the Requirements for\\
  the Degree of Master of Philosophy \\
  in \programname \\
  \vfill \vfill
  \thesisdate, Hong Kong
  \vfill
\end{center}

\vfill

\newpage
\addcontentsline{toc}{chapter}{Acknowledgments}

\centerline{{\bf \Large Acknowledgments}} \vspace{5mm} \noindent

First of all, I would like to thank my dear advisor
Professor S.-H. Gary Chan, for his years of mentoring, advice, and encouragement. 
It was my great luck and honor to work under his supervision. 
I have learned a lot from him about how to develop, evaluate, express, and defend my work.
I would have never gotten to where I am today without his support and continuous encouragement.

I would like to thank all my colleagues at the Hong Kong University of Science and Technology, where I enjoyed a wonderful life during my postgraduate study, including Jiajie Tan, Guanyao Li, Zhangyu Chang, Wong Wang Kit, Song Wen, and many others. I was fortunate to meet and learn a lot from all these people.

Next, I want to express my sincere gratitude to Dr. Nanyang Ye, Dr. Fengwei Zhou, Dr. Lanqing Hong, and Dr. Han-Jia Ye for their guidance and valuable suggestions for research projects and broaden my research horizons. They are all excellent researchers who I look up to very much. Besides, I would like to thank Dr. Zhenguo Li for providing me with the great opportunity to go out to experience the industry world.

Last but not least, I would like to express my deepest thanks to my parents for their years of company and support to pursue my career dream.

\newpage
\addcontentsline{toc}{chapter}{Table of Contents}
\tableofcontents

\newpage
\addcontentsline{toc}{chapter}{List of Figures}
\listoffigures

\newpage
\addcontentsline{toc}{chapter}{List of Tables}
\listoftables

\newpage
\addcontentsline{toc}{chapter}{Abstract}

\begin{center}
{\Large \thesistitle}\\
\vspace{20mm}
by \thesisauthor\\
\departmentname\\
The Hong Kong University of Science and Technology
\end{center}
\vspace{8mm}
\begin{center}
Abstract
\end{center}
\par
\noindent

Deep learning has been demonstrated with tremendous success in recent years. Despite so, its performance in practice often degenerates drastically when encountering out-of-distribution (OoD) data, i.e. training and test data are sampled from different distributions. In this thesis, we study ways toward robust OoD generalization for deep learning, i.e., its performance is not susceptible to distribution shift in the test data.

We first propose a novel and effective approach to disentangle the spurious correlation between features that are not essential for recognition. It employs decomposed feature representation by orthogonalizing the two gradients of losses for category and context branches. Furthermore, we perform gradient-based 
augmentation on context-related features (e.g., styles, backgrounds, or scenes of target objects) to improve the robustness of learned representations. Results show that our approach generalizes well for different distribution shifts.

We then study the problem of strengthening neural architecture search in OoD scenarios.
We propose to optimize the architecture parameters that minimize the validation loss 
on synthetic OoD data, under the condition that corresponding network parameters minimize the training loss. 
Moreover, to obtain a proper validation set, we learn a conditional generator by maximizing their losses computed by different neural architectures.
Results show that our approach effectively discovers robust architectures that perform well for OoD generalization.

\newpage
\pagenumbering{arabic}
\pagestyle{plain}
\setcounter{page}{1}

\chapter{Introduction}

Deep learning has achieved tremendous success in various applications of computer vision~\cite{he2016deep} and natural language processing~\cite{devlin2018bert}, under the implicit assumption that the training and test data are drawn independent and identically distributed (IID) from the same distribution.

While neural networks often exhibit super-human generalization performance on the training distribution, they can be susceptible to minute changes in the test distribution~\cite{recht2019imagenet, szegedy2014intriguing}.
This is problematic because sometimes true underlying data distributions are significantly underrepresented or misrepresented by the limited training data at hand.
In the real world, such mismatches are commonly observed~\cite{koh2020wilds, geirhos2020shortcut, ye2022ood}, and have led to significant performance drops in many deep learning algorithms~\cite{bahng2019rebias, krueger2020outofdistribution, mancini2020towards}.
As a result, the reliability of current learning systems is substantially undermined in critical applications such as medical imaging~\cite{castro2020causality, albadawy2018deep}, autonomous driving~\cite{dai2018dark, volk2019towards, alcorn2019strike, sato2020security, mao2021voxel, mao2021pyramid, michaelis2019benchmarking}, and security systems~\cite{huang2020survey}.

The task of generalizing under such distribution shifts, has been fragmentarily researched in different areas, such as Domain Generalization (DG)~\cite{blanchard2011generalizing, muandet2013domain, wang2021generalizing, zhou2021domain, bai2023improving}, Causal Inference~\cite{pearl2010causal, peters2017elements}, and Stable Learning~\cite{zhang2021deep}.
In the setting of OoD generalization, models usually have access to multiple training datasets of the same task collected in different environments.
The goal of towards robust OoD generalization is to learn from these different but related training environments and then extrapolate to unseen test environments~\cite{arjovsky2020out, shen2021towards}, which means the deep neural networks preserve the super-human generalization performance in the mismatched test distribution. In this thesis, we focus on discussing two approaches for tackling OoD generalization challenges: semantic data augmentation and neural architecture search approaches.
Data and network architectures play pivotal roles in constructing machine learning systems in practice. It should be noticed that our semantic data augmentation approach and neural architecture search approach is diagonal to each other and that can be easily combined.

In the first part of this dissertation, we explore disentangled representation and data augmentation for OoD Generalization. We discuss DecAug, a novel decomposed feature representation and semantic augmentation approach for OoD generalization~\cite{bai2020decaug}. Specifically, DecAug disentangles the category-related and context-related features by orthogonalizing the two gradients (w.r.t. intermediate features) of losses for predicting category and context labels, where category-related features contain causal information of the target object, while context related features cause distribution shifts between training and test data. Furthermore, we perform gradient-based augmentation on context-related features to improve the robustness of the learned representations. Experimental results show that DecAug outperforms other state-of-the-art methods on various OoD datasets, which is among the very few methods that can deal with different types of OoD generalization challenges.

In the second part of the dissertation, we explore robust neural architecture search for OoD generalization~\cite{bai2021ood}.
Recent advances on Out-of-Distribution (OoD) generalization reveal the robustness of deep learning models against different kinds of distribution shifts in real-world applications. However, existing works focus on OoD algorithms, such as invariant risk minimization, domain generalization, or stable learning, without considering the influence of deep model architectures on OoD generalization, which may lead to sub-optimal performance. Neural Architecture Search (NAS) methods search for the architecture based on its performance on the training data, which may result in poor generalization for OoD tasks. In this work, we propose robust Neural Architecture Search for OoD generalization (NAS-OoD), which optimizes the architecture with respect to its performance on the generated OoD data by gradient descent. Specifically, a data generator is learned to synthesize OoD instances by maximizing their losses computed by different neural architectures, while the goal for architecture search is to find the optimal architecture parameters that minimize the synthetic OoD data losses. The data generator and the neural architecture are jointly optimized in an end-to-end manner, and the minimax training process effectively discovers robust architectures that generalize well for different distribution shifts. Extensive experimental results show that NAS-OoD achieves superior performance on various OoD generalization benchmarks with deep models having a much fewer number of parameters. In addition, on a real industry dataset, the proposed NAS-OoD method reduces the error rate by more than 70\% compared with the state-of-the-art method, demonstrating the proposed method’s practicality for real applications.

\clearpage

\chapter{Background and Preliminaries}

\section{Out-of-Distribution Generalization Robustness}

Several main approaches for solving the OoD generalization problem can be identified in the literature, including risk regularization methods, domain generalization, stable learning, data augmentation and disentangled representation.
Currently, there are two main streams of research on OoD generalization algorithms in deep learning: Domain Generalization (DG) and risk regularization methods.
DG has been an active research area for quite some time, dating back to the work of \cite{torralba2011unbiased}.
It has since then sprouted into a number of branches such as invariant representation learning \cite{muandet2013domain, li2018domain, li2018domain2, akuzawa2019adversarial, albuquerque2019adversarial}, meta-learning for DG \cite{li2018learning, balaji2018metareg, dou2019domain, li2019feature}, and data augmentation for DG \cite{volpi2018generalizing, zhang2019unseen, wang2020heterogeneous, wang2020learning, zhou2020deep}.
The seminal work of risk regularization methods, IRM~\cite{arjovsky2019invariant}, aims to find an invariant representation of data from different training environments by adding an invariant risk regularization. It has thereafter inspired several other notable algorithms, including IRM-Games~\cite{ahuja2020invariant}, VREx~\cite{krueger2020outofdistribution} and IGA~\cite{koyama2020out}.
The living benchmark is created by \cite{gulrajani2021in} to facilitate disciplined and reproducible DG research.
After conducting a large-scale hyperparameter search, the performances of fourteen algorithms on seven datasets are reported in the paper.
The authors then arrive at the conclusion that ERM beats most of DG algorithms under the same fair setting.

Risk regularization methods for OoD generalization 
are motivated by the theory of causality and causal Bayesian networks (CBNs), aiming to find an invariant representation of data from different training environments (IRM, \cite{arjovsky2019invariant}). 
To make the model robust to unseen test environments, the invariant risk minimization added a penalty item in the loss function to monitor the optimality of a classifier on different environments. 
IRM-Games~\cite{ahuja2020invariant}borrowed the principle of IRM and provided another way to find the invariant feature.
Risk extrapolation (REx, \cite{krueger2020outofdistribution}) adopts a min-max framework to derive a model that can perform well on the worst linear combination of risks from different environments. 
These methods typically perform well on datasets containing correlation shift, such as Colored MNIST, Rotated MNIST, etc. However, it is demonstrated that they cannot generalize well on datasets containing diversity shift.
\noindent\textit{Domain generalization} methods can be divided into several different strategies to learn invariant predictors:learning invariant features, sharing parameters, meta-learning, or
performing data augmentation\cite{gulrajani2020search}.
Fabio \etal proposed a self-supervised learning method JiGen. It achieves good performance in typical domain generalization datasets, such as PACS \cite{carlucci2019domain} and VLCS\cite{fang2013unbiased}. In JiGen, a subnetwork sharing weights with the main network is used to solve Jigsaw puzzles. This self-supervised learning method helps in improving the generalization of JiGen on unseen domains. Qi \etal adopted meta-learning to learn invariant feature representations across domains \cite{dou2019domain}. 
Fengchun \etal considered a worst-case scenario in model generalization, where there is only one single domain for training~\cite{qiao2020learning}.
InfoDrop~\cite{shi2020informative} proposes a light-weight model-agnostic informative dropout to reduce texture bias and improve robustness.
Zeyi \etal introduces a training scheme representation self-challenging to improve the generalization of CNN to the out-of-domain data~\cite{huang2020self}.
Recently, Mancini \etal proposed the curriculum mixup method for domain generalization, in which data from multiple domains in the training dataset mix together by a curriculum schedule of mixup method \cite{mancini2020towards}. Domain generalization methods have achieved performance gain in generalizing models to unseen domains, typically different image styles. However, in recent OoD research, it has been found that domain adaptation methods with similar design principles can have problems when training distribution is largely different from test distribution \cite{arjovsky2019invariant}.

Stable learning is a recently proposed new concept \cite{Kuang2018}, which focuses on learning a model that can achieve stable performances across different environments. The methodology of stable learning largely inherited from sampler reweighting in causal inference \cite{Kuang2018}. For example, in the convents with batch balancing algorithm (CNBB), instead of viewing all samples in the dataset equally, it first calculates the weights of each sample by a confounder balancing loss which tests whether including or excluding a sample's feature can lead to a significant change in the latent feature representations to reduce the effects of samples that are largely affected by confounders \cite{Kosuke2014,Robins1994}. While these methods can have theoretical guarantees on simplified models, when confounder results in strong spurious correlations, this method may not be able to work well especially in the deep learning paradigm.

\section{Disentangled Representation and Semantic Augmentation}

Disentangled representation 
proposes to decompose the latent factors from the image variants to obtain an understanding of the data~\cite{chen2016infogan,higgins2017beta,ma2019disentangled}.
CSD~\cite{piratla2020efficient} decomposes and learns a common component and a domain-specific component for generalizing to new domains.
It aims to learn representations that separate the explanatory factors of variations behind the data. Such representations are demonstrated to be more resilient to the complex variants and able to bring enhanced generalization ability~\cite{liu2018detach,peng2019domain}.
Moreover, disentangled representations are inherently more interpretable.
How to obtain disentangled representations is still a challenging problem.
\cite{shen2020closed} identifies latent semantics and examine the representation learned by GANs. The researchers derive a closed-form factorization method to discover latent semantic and prove that all semantic directions found are orthogonal to each other in the latent space.
\cite{bahng2019rebias} trains a de-biased
representation by encouraging it to be different from a set of representations that are biased by design.
The method discourages models from taking bias shortcuts, resulting in improved performances on de-biased test data.

Data augmentation 
is a common practice to improve the generalization ability of deep models~\cite{alexnet2012,srivastava2015training,han2017deep}. Augmentation strategies, such as Cutout~\cite{devries2017improved}, Mixup~\cite{zhang2017mixup}, CutMix~\cite{yun2019cutmix} and AugMix~\cite{hendrycks2019augmix}, are able to improve the performance of deep models effectively. One promising data augmentation strategy closely relevant to our work is to interpolate high-level representation learned by the deep model. \cite{upchurch2017deep} showed that simple linear interpolation of features can achieve meaningful semantic transformation on the input image. Inspired by this observation, \cite{wang2019implicit} proposes to use random vectors sampled from a class-specific normal distribution to achieve augmentation of deep features. Instead of augmenting the features explicitly, they minimize an upper bound of the expected loss on augmented data. 
In order to handle few-shot learning problems, \cite{hariharan2017low} proposes to train a feature generator that can transfer modes of variation from categories of a large dataset to novel classes with limited samples. 
To ease the learning from long-tailed data, \cite{liu2020deep} proposed to transfer the intra-class distribution of the head class to the tail class by augmenting the deep features of the instances in the tail class. Different from these methods, our method performs gradient-based augmentation on disentangled context-related features to eliminate distribution shifts for various OoD tasks.

\section{Robustness from Architecture Perspective}

Data distribution mismatches between training and testing set exist in many real-world scenes. Different methods have been developed to tackle OoD shifts. IRM~\cite{arjovsky2019invariant} targets to extract invariant representation from different environments via an invariant risk regularization. IRM-Games~\cite{ahuja2020invariant} aims to achieve the Nash equilibrium among multiple environments to find invariants based on ensemble methods. 
The work of \cite{hendrycks2019using} finds that using pre-training can improve model robustness and uncertainty. However, existing OoD generalization approaches seldom consider the effects of architecture which leads to suboptimal performances. 
EfficientNet~\cite{tan2019efficientnet} proposes a new scaling method that uniformly scales all dimensions of depth, width, and resolution via an effective compound coefficient. 
DARTS~\cite{liu2018darts} presents a differentiable manner to deal with the scalability challenge of architecture search. ISTA-NAS~\cite{yang2020ista} formulates neural architecture search as a sparse coding problem. 
The work \cite{chen2020robustness} uses a robust loss to mitigate the performance degradation under symmetric label noise. However,  NAS overfits easily, the work \cite{yang2019evaluation, guo2020single} points out that NAS evaluation is frustratingly hard. Thus, it is highly non-trivial to extend existing NAS algorithms to the OoD setting.

Recent studies show that different architectures present different generalization abilities.
The work of \cite{zhang2021can} uses a functional modular probing method to analyze deep model structures under the OoD setting.
The work \cite{hendrycks2020pretrained} examines and shows that pre-trained transformers achieve not only high accuracy on in-distribution examples but also improvement of out-of-distribution robustness. The work \cite{dapello2020simulating} presents CNN models with neural hidden layers that better simulate the primary visual cortex improve robustness against image perturbations. The work \cite{dosovitskiy2020image} uses a pure transformer applied directly to sequences of image patches, which performs quite well on image classification tasks compared with relying on CNNs. The work of \cite{dong2020adversarially} targets to improve the adversarial robustness of the network with NAS and achieves superior performance under various attacks. However, they do not consider OoD generalization from the architecture perspective.

\chapter{Out-of-Distribution Generalization via Decomposed Feature Representation and Semantic Augmentation}

\section{Introduction}

Deep learning has demonstrated superior performances on standard benchmark datasets from various fields, such as image classification \cite{alexnet2012}, object detection \cite{redmon2016you}, natural language processing \cite{bert2019}, and recommendation systems \cite{Cheng2016}, assuming that the training and test data are independent and identically distributed (IID).
In practice, however, it is common to observe distribution shifts among training and test data, which is known as out-of-distribution (OoD) generalization.
How to deal with OoD generalization is still an open problem.

\begin{figure}[!t]
    \centering
    \includegraphics[width=0.65\linewidth]{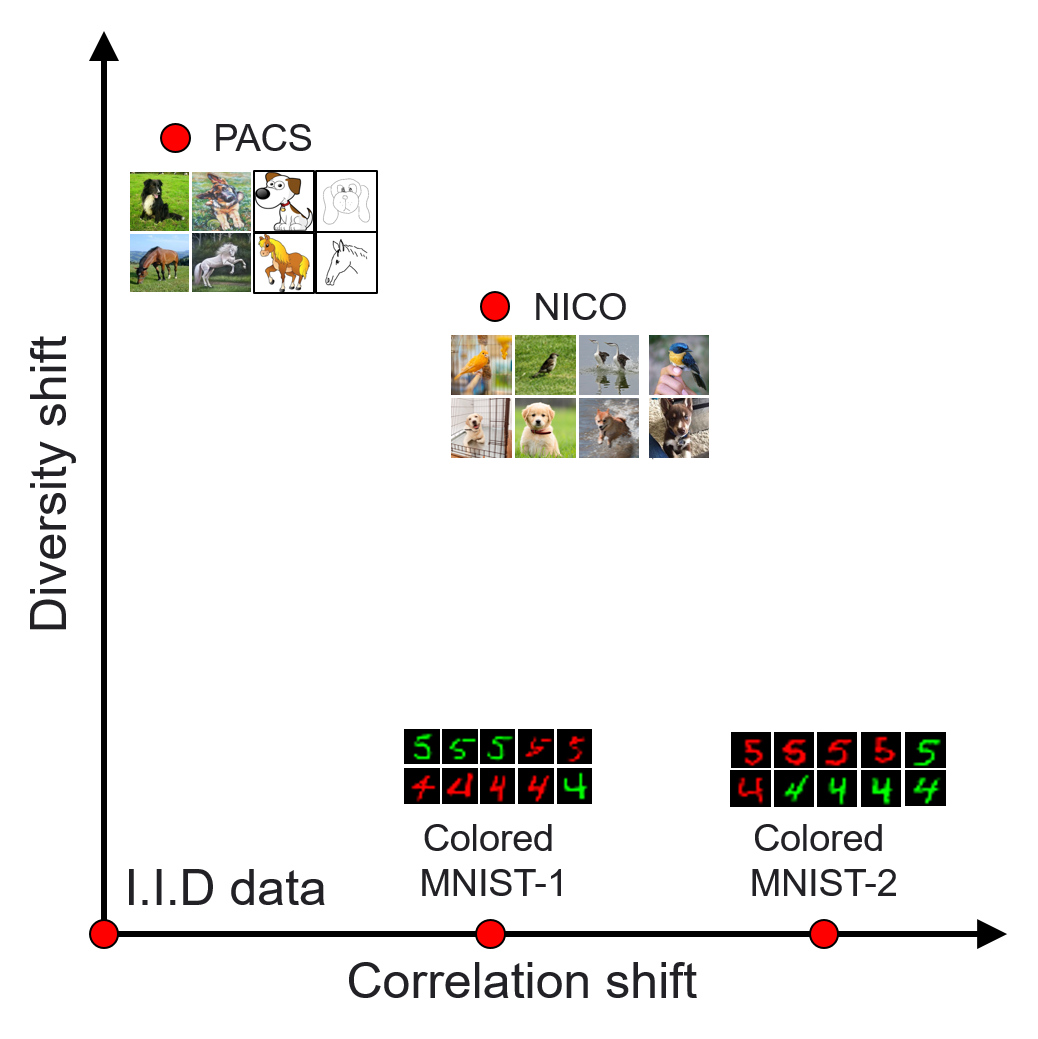}
    \caption{Illustration of the two-dimensional OoD shifts among datasets in different OoD research areas, including Colored MNIST, PACS, and NICO. Extensive experiments showed that many OoD methods can only deal with one dimension of OoD shift.}
    \label{fig:2dimood}
\end{figure}

To improve a DNN's OoD generalization ability, diversified research endeavors are observed recently, which mainly includes domain generalization, invariant risk minimization, and stable learning. Various benchmark datasets are adopted to evaluate the proposed OoD generalization algorithms, such as Colored MNIST~\cite{arjovsky2019invariant}, PACS~\cite{Li2017}, 
and NICO~\cite{he2020towards}.
Among these datasets, PACS are widely used in domain generalization~\cite{carlucci2019domain, mancini2020towards} to validate DNN's ability to generalize across different image styles. 
On the other hand, in recent risk regularization methods, Colored MNIST is often considered \cite{arjovsky2019invariant, ahuja2020invariant, krueger2020outofdistribution, xie2020risk}, where distribution shift is introduced by manipulating the correlation between the colors and the labels.
In stable learning, another OoD dataset called NICO was introduced recently~\cite{he2020towards}, which contains images with various contexts. Along with this dataset, an OoD learning method, named CNBB, is proposed, based on sample re-weighting inspired by causal inference.

In this paper, we observe that methods perform well in one OoD dataset, such as PACS, which may show very poor performance on another dataset, such as Colored MNIST, as shown in our experiments (see experimental results in Sec.~\ref{exp:results}).
That may because of the different types of OoD shifts.
Here, we identify two types of out-of-distribution factors, including the correlation shift and the diversity shift.

One is the correlation shift, which means that labels and environments are correlated and the relations change across different environments. 
We observe the correlation shift between the training set and the test set in Colored MNIST.
Specifically, in training set, the number $5$ is usually in green while the number $4$ is usually in red.
However, in test set, the number $5$ tends to be in red while the number $4$ tends to be in green.
If a model learns color green to predict label $5$ when training, it would suffer from the correlation shift when testing.

\begin{figure}[!t]
\centering
\includegraphics[height = 13.85cm]{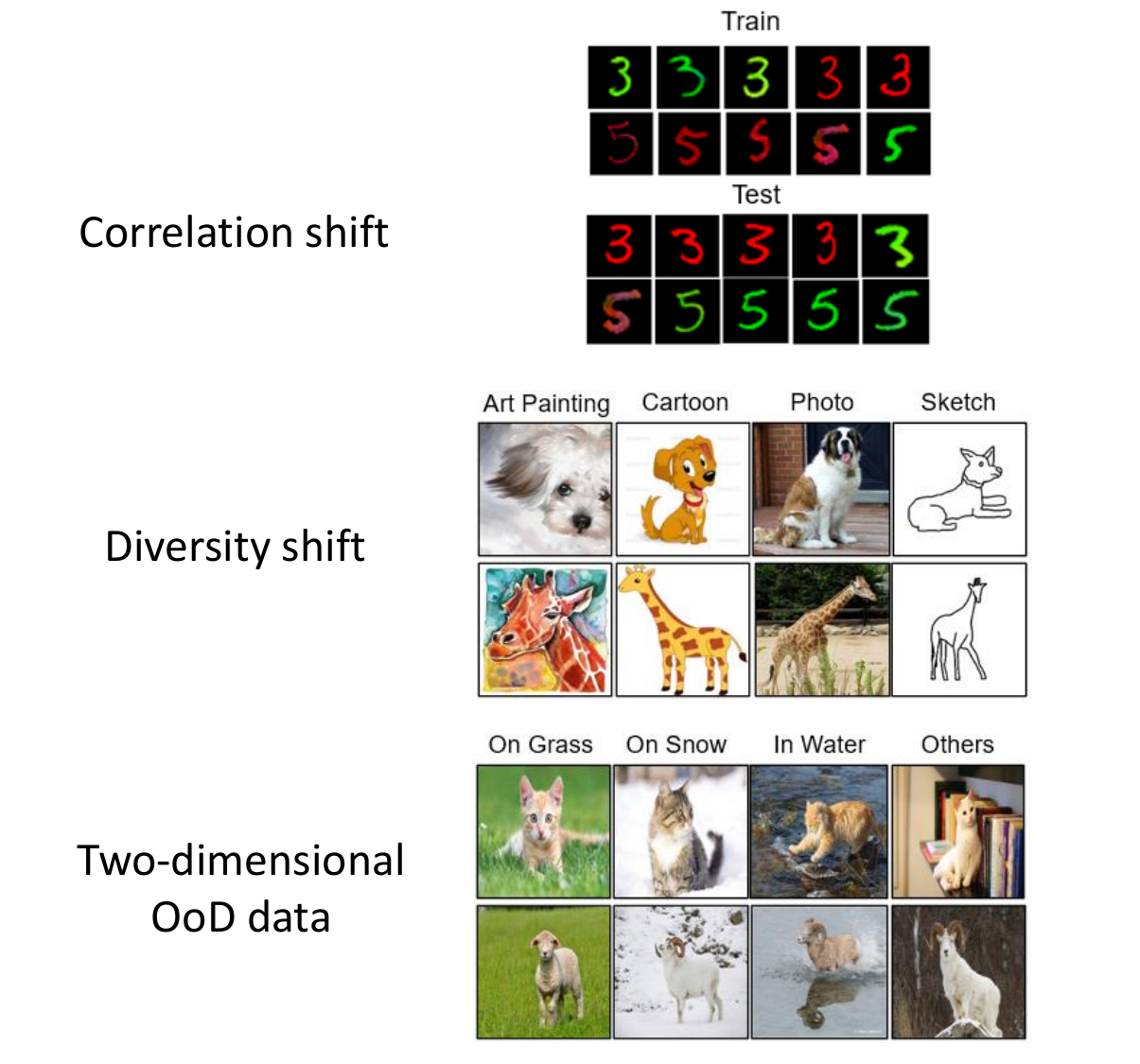}
\caption[Typical examples of the two-dimensional out-of-distribution data from Colored MNIST, PACS, and NICO.]{Typical examples of the two-dimensional out-of-distribution data from Colored MNIST, PACS, and NICO. For the two-dimensional OoD data from the NICO dataset. Contexts such as ``on grass'', ``on snow'' and ``in water'' result in mini-domains in the dataset, suggesting the diversity shift among the data. On the other hand, specific contexts such as ``at home'' are common for cats while is unusual for dogs. The category branch and the context branch are correlated, indicating the correlation shift among data.}
\label{fig:img}
\end{figure}

Another out-of-distribution factor is the diversity shift. For example, in PACS, the data come from four different domains: photo, art painting, cartoon and sketch. Data in different domains have significantly different styles. Usually, we leave one domain out as the test set, and the remaining three domains as the training set. The model trained on the training set would susceptible to the diversity shift on the test set. See Fig.~\ref{fig:img} as an illustration.

Data in actual scenarios usually involve two different OoD factors simultaneously.
For example, in NICO, (Fig.~\ref{fig:img}), 
different contexts such as ``in cage", ``in water", and ``on grass" lead to diversity shift, while some contexts are related to specific categories, such as a bird would be ``in hand'' and a dog may be ``at home''.
We also put datasets from multiple research areas on the same axis, 
the $X$-axis denotes the correlation shift which controls the contribution proportions of correlated features, the $Y$-axis denotes the diversity shift which stands for the change of feature types. Specifically, in the Colored MNIST dataset, the correlation between color and label is high, while in the PACS, the style of images is more diverse. In the NICO, both correlation shift and diversity shift exist.

To handle different OoD factors simultaneously, we propose DecAug, a novel decomposed feature representation and semantic augmentation approach for OoD generalization. 
Specifically, our method first decomposes the high-level representations of input images into category-related and context-related features by orthogonalizing the two gradients of losses for predicting category and context labels respectively. Here, category-related features are essential for recognizing the category labels of the images, while context-related features are not essential for the recognition but correlated with the category labels.
After obtaining the decomposed features, we do gradient-based semantic augmentation on context-related features, representing attributes, styles, backgrounds, or scenes of target objects, to disentangle the spurious correlation between features that are not essential for the recognition and category labels. 

Our contributions are as follows:
\begin{enumerate}
    \item We test OoD methods from diversified research areas and show that very often, they only deal with one special type of OoD generalization challenge. 
    \item We propose \name{} to learn disentangled features that capture the information of category and context respectively and perform gradient-based semantic augmentation to enhance the generalization ability of the model. 
    \item Extensive experiments show that our method consistently outperforms previous OoD methods on various types of OoD tasks. For instance, we achieve an average accuracy of 82.39\% with ResNet-18~\cite{he2016deep} on PACS~\cite{Li2017}, which is the state-of-the-art performance.
\end{enumerate}

\section{Related Work}
\label{sec:decaug-rela}
In this section, we review literature related to risk regularization methods, domain generalization, stable learning, data augmentation and disentangled representation.

\subsection{Risk Regularization Methods for OoD Generalization}
The invariant risk minimization (IRM, \citet{arjovsky2019invariant}) is motivated by the theory of causality and causal Bayesian networks (CBNs), aiming to find an invariant representation of data from different training environments. 
To make the model robust to unseen interventions, the invariant risk minimization added invariant risk regularization to monitor the optimality of a dummy classifier on different environments. 
IRM-Games~\cite{ahuja2020invariant} further improves the stability of IRM.
Risk extrapolation (Rex, \citet{krueger2020outofdistribution}) adopts a min-max framework to derive a model that can perform well on the worst linear combination of risks from different environments. 
These methods typically perform well on synthetic datasets, such as Colored MNIST. However, it is unknown how they can generalize on more complex practical datasets beyond MNIST classification tasks.

\subsection{Domain Generalization}
\citet{carlucci2019domain} proposed a self-supervised learning method for typical domain generalization datasets, such as PACS, by solving Jigsaw puzzles. \citet{dou2019domain} adopted meta-learning to learn invariant feature representations across domains. Recently, \citet{mancini2020towards} proposed the curriculum mixup method for domain generalization, in which data from multiple domains in the training dataset mix together by a curriculum schedule of mixup method. Domain generalization methods have achieved performance gain in generalizing models to unseen domains. However, recent OoD research finds that domain adaptation methods with similar design principles can have problems when training distribution is largely different from test distribution \cite{arjovsky2019invariant}.

Stable learning is a recently proposed new concept \cite{Kuang2018}, which focuses on learning a model that can achieve stable performances across different environments. The methodology of stable learning largely inherited from sampler reweighting in causal inference \cite{Kuang2018, shen2019stable, he2020towards}. 
While these methods can have theoretical guarantees on simplified models, when confounder results in strong spurious correlations, this method may not be able to work well especially in the deep learning paradigm.

\subsection{Data Augmentation Approaches}

Data augmentation has been widely used in deep learning to improve the generalization ability of deep models~\cite{alexnet2012,srivastava2015training,han2017deep}. Elaborately designed augmentation strategies, such as Cutout~\cite{devries2017improved}, Mixup~\cite{zhang2017mixup}, CutMix~\cite{yun2019cutmix}, and AugMix~\cite{hendrycks2019augmix}, have effectively improved the performance of deep models. A more related augmentation method is to interpolate high-level
representations. \citet{upchurch2017deep} shows that simple linear interpolation can achieve meaningful semantic transformations. Motivated by this observation, \citet{wang2019implicit} proposes to augment deep features with random vectors sampled from class-specific normal distributions. Instead of augmenting the features explicitly, they minimize an upper bound of the expected loss on augmented data. To tackle the few-shot learning problem, \citet{hariharan2017low} suggest training a feature generator that can transfer modes of variation from categories of a large dataset to novel classes with limited samples. To ease the learning from long-tailed data, \citet{liu2020deep} proposes to transfer the intra-class distribution of head classes to tail classes by augmenting deep features of instances in tail classes. Different from these approaches, our method performs gradient-based augmentation on disentangled context-related features to eliminate distribution shifts for various OoD tasks.

Disentangling the latent factors from the image variants is a promising way to provide an understanding of the observed data \cite{chen2016infogan,higgins2017beta,ma2019disentangled}.
It aims to learn representations that separate the explanatory factors of variations behind the data. Such representations are more resilient to the complex variants and able to bring enhanced generalization ability \cite{liu2018detach,peng2019domain}.
Disentangled representations are inherently more interpretable.
How to obtain disentanglement is still a challenging problem.
\citet{shen2020closed} identifies latent semantics and examines the representation learned by GANs.
\citet{bahng2019rebias} trains a de-biased
representation by encouraging it to be different from a set of representations that are biased by design.
In this paper, semantic vectors found by DecAug with orthogonal constraints are disentangled from each other in the feature space.

\section{Methodology}

We argue that it is essential to obtain the disentangled features to address the aforementioned two-dimension OoD generalization simultaneously, one is the target for recognition and the other is not critical but correlated for recognition.
In this section, we introduce \name{}, a novel decomposed feature representation and semantic augmentation approach to learn the disentangled high-level representation in the feature space. The decomposition is achieved based on the orthogonalization constrain of the two gradients of the losses for predicting category and context labels, as shown in Figure~\ref{fig:framework}. We do gradient-based semantic augmentation on context-related features to enhance the generalization of the model. The augmented features and category-related features are concatenated to make the final prediction.

\begin{figure}[!t]
    \centering
    \includegraphics[width=1.0\linewidth]{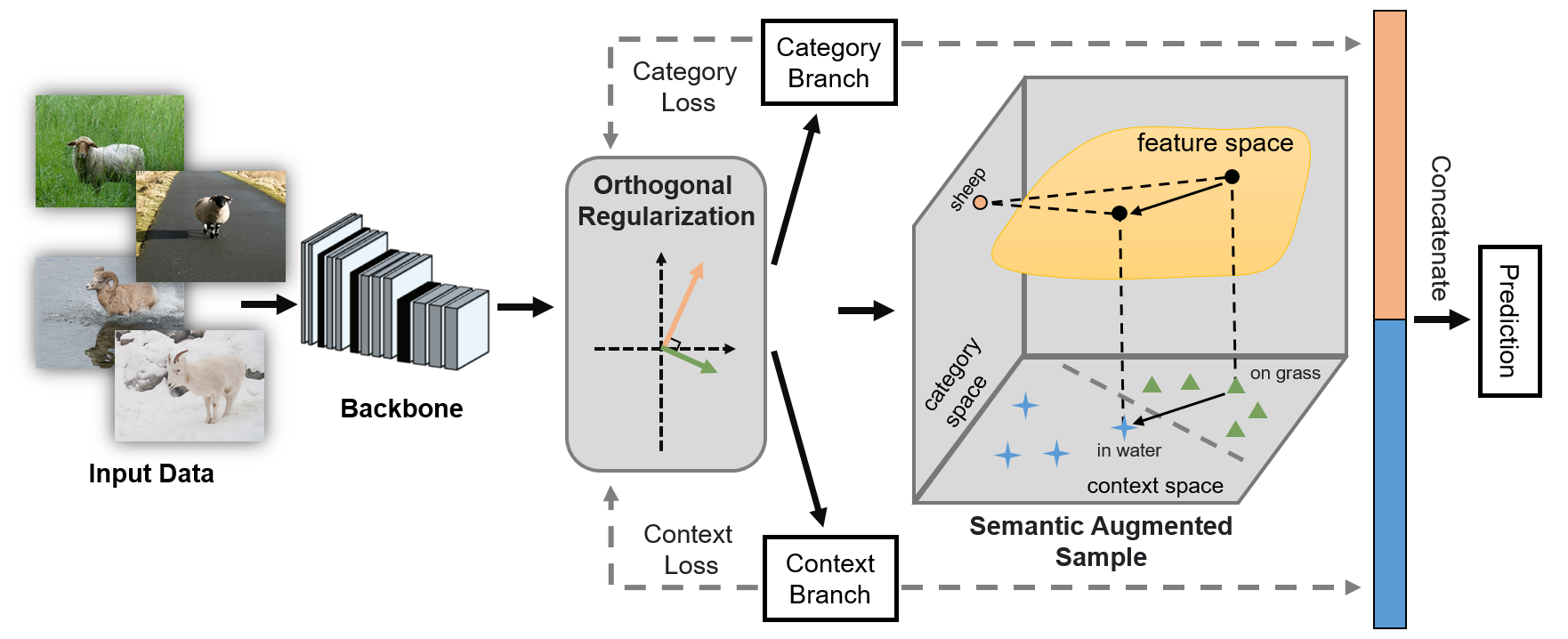}
    \caption{An overview of the proposed \name{}. The input features $z$ extracted by the backbone are decomposed into category-related and context-related features with orthogonal regularization. Gradient-based augmentation is then preformed in the feature space to get semantic augmented samples.}
    \label{fig:framework}
\end{figure}

\subsection{Feature Decomposition}

As shown in Figure~\ref{fig:framework}, we consider an image recognition task with the training set $\mathcal{D} = \{ (x_i, y_i, c_i) \}_{i=1}^{N}$, where $x_i$ is the input image, $y_i$ is the corresponding category label, $c_i$ is the corresponding context label, and $N$ is the number of training data. The input data are feed to the network and mapped to the feature space. The high-level representation are disentangled into two parts: category-related features and context-related features. Given an input image $x_i$ with category label $y_i$ and context label $c_i$, let $z_i=g_{\theta}(x_i)$ denotes the features extracted by a backbone $g_{\theta}$. For the category branch, $z_i$ is decomposed into $z^1_i = f_{\theta^1}(z_i)$ by a category feature extractor $f_{\theta^1}$, followed by a classifier $h_{\phi^1}(z^1_i)$ to predict the category label. For the context branch, $z_i$ is decomposed into $z^2_i = f_{\theta^2}(z_i)$ by a context feature extractor $f_{\theta^2}$, followed by a classifier $h_{\phi^2}(z^2_i)$ to predict the context label. We leverage the standard cross-entropy losses $\mathcal{L}^1_i(\theta, \theta^1, \phi^1) = \ell(h_{\phi^1}\circ f_{\theta^1}(z_i), y_i)$ and $\mathcal{L}^2_i(\theta, \theta^2, \phi^2) = \ell(h_{\phi^2}\circ f_{\theta^2}(z_i), c_i)$ to optimize these two branches, together with the backbone, respectively.

\begin{algorithm}[!ht]\small
\caption{\name{}: Decomposed Feature Representation and Semantic Augmentation for OoD generalization}
\label{alg:method}
\begin{algorithmic}[1]
\REQUIRE Training set $\mathcal{D}$, batch size $n$, learning rate $\beta$, hyper-parameters $\epsilon$, $\lambda^1$, $\lambda^2$, $\lambda^{\text{orth}}$.
\ENSURE $\theta$, $\theta^1$, $\phi^1$, $\theta^2$, $\phi^2$, $\phi$.
\STATE Initialize $\theta$, $\theta^1$, $\phi^1$, $\theta^2$, $\phi^2$, $\phi$;
\REPEAT
\STATE Sample a mini-batch of training images $\{(x_i, y_i, c_i)\}_{i=1}^{n}$ with batch size $n$;
\FOR{each $(x_i, y_i, c_i)$}
\STATE $z_i \leftarrow g_{\theta}(x_i)$;
\STATE $\mathcal{L}^1_i(\theta, \theta^1, \phi^1) \leftarrow \ell(h_{\phi^1}\circ f_{\theta^1}(z_i), y_i)$;
\STATE $\mathcal{L}^2_i(\theta, \theta^2, \phi^2) \leftarrow \ell(h_{\phi^2}\circ f_{\theta^2}(z_i), c_i)$;
\STATE Compute $\mathcal{L}^{\text{orth}}_i(\theta^1, \phi^1, \theta^2, \phi^2)$ according to Eq.~\eqref{equ:orthloss};
\STATE Randomly sample $\alpha_i$ from $[0,1]$;
\STATE Generate $\tilde{z}^2_i$ according to Eq.~\eqref{equ:featureaug};
\STATE $\mathcal{L}^{\text{concat}}_i(\theta, \theta^1, \theta^2, \phi) \leftarrow \ell(h_{\phi}([f_{\theta^1}(z_i), \tilde{z}^2_i]),y_i)$;
\STATE Compute $\mathcal{L}_i(\theta, \theta^1, \phi^1, \theta^2, \phi^2, \phi)$ according to Eq.~\eqref{equ:finalloss};
\ENDFOR
\STATE $(\theta, \theta^1, \phi^1, \theta^2, \phi^2, \phi) \leftarrow (\theta, \theta^1, \phi^1, \theta^2, \phi^2, \phi)$ \\ \rightline{$- \beta \cdot \nabla \frac{1}{n} \sum\limits_{i=1}^n \mathcal{L}_i(\theta, \theta^1, \phi^1, \theta^2, \phi^2, \phi)$;}
\UNTIL convergence;
\end{algorithmic}
\end{algorithm}

For the orthogonality in the feature space between category-related feature $z^1_i$ and context-related feature $z^2_i$.
It is obvious that the direction of the non-zero gradient of a function is the direction in which the function increases most quickly, while the direction that is orthogonal to the gradient direction is the direction in which the function does not change.

We enforce the gradient of the category loss $\ell(h_{\phi^1}\circ f_{\theta^1}(z_i), y_i)$ to be orthogonal to the gradient of the context loss $\ell(h_{\phi^2}\circ f_{\theta^2}(z_i), c_i)$ with respect to $z_i$ for better disentangling the category-related and context-related features.
In this way, the direction that changes the category loss most quickly will not change the context loss from $z_i$ and vice versa. Specifically, let $\mathcal{G}^1_i(\theta^1, \phi^1) = \nabla_{z_i} \ell(h_{\phi^1}\circ f_{\theta^1}(z_i), y_i)$ and $\mathcal{G}^2_i(\theta^2, \phi^2) = \nabla_{z_i} \ell(h_{\phi^2}\circ f_{\theta^2}(z_i), c_i)$ be the gradients of the category and context loss with respect to $z_i$ respectively. To ensure the orthogonality, we minimize the following loss:
\small
\begin{center}
\begin{equation}\label{equ:orthloss}
\mathcal{L}^{\text{orth}}_i(\theta^1, \phi^1, \theta^2, \phi^2) = (\frac{\mathcal{G}^1_i(\theta^1, \phi^1)}{\left\lVert \mathcal{G}^1_i(\theta^1, \phi^1) \right\rVert} \cdot \frac{\mathcal{G}^2_i(\theta^2, \phi^2)}{\left\lVert \mathcal{G}^2_i(\theta^2, \phi^2) \right\rVert})^2.
\end{equation}
\end{center}
\normalsize

\subsection{Semantic Augmentation}

After obtaining the decomposed category-related and context-related features, we conduct gradient-based semantic augmentation on the context-related features to eliminate the spurious correlation between features. We may have multiple alternative directions for OoD in the semantic feature space, we assume a worse case for the model to learn for OoD generalization by obtaining the adversarially perturbed examples in the feature space to ensure good performance across different environments.
To be specific, let $\mathcal{G}^{\text{aug}}_i = \nabla_{z^2_i} \ell(h_{\phi^2}(z^2_i), c_i)$ be the gradient of the context loss with respect to $z^2_i$. We augment the context-related features $z^2_i$ as follows:
\small
\begin{center}
\begin{equation}\label{equ:featureaug}
\tilde{z}^2_i = z^2_i + \alpha_i \cdot \epsilon \cdot \frac{\mathcal{G}^{\text{aug}}_i}{\left\lVert \mathcal{G}^{\text{aug}}_i \right\rVert},
\end{equation}
\end{center}
\normalsize
where $\alpha_i$ is randomly sampled from $[0,1]$ and $\epsilon$ is a hyper-parameter that determines the maximum length of the augmentation vectors

The augmented $\tilde{z}^2_i$ and category-related features $z^1_i$ are concatenated to make the final prediction $h_{\phi}([z^1_i, \tilde{z}^2_i])$, where $[z^1_i, \tilde{z}^2_i]$ is the concatenation of two features and $h_{\phi}$ is a classifier. The corresponding parameters are optimized by the cross-entropy loss $\mathcal{L}^{\text{concat}}_i(\theta, \theta^1, \theta^2, \phi) = \ell(h_{\phi}([z^1_i, \tilde{z}^2_i]),y_i)$. Together with the aforementioned losses, the final loss is formulated as follows:

\small
\begin{equation}\label{equ:finalloss}
\begin{aligned}
\mathcal{L}_i(\theta, \theta^1, \phi^1, \theta^2, \phi^2, \phi) &= \mathcal{L}^{\text{concat}}_i(\theta, \theta^1, \theta^2, \phi) \\
&+ \lambda^1 \cdot \mathcal{L}^1_i(\theta, \theta^1, \phi^1) + \lambda^2 \cdot \mathcal{L}^2_i(\theta, \theta^2, \phi^2)\\
&+ \lambda^{\text{orth}} \cdot \mathcal{L}^{\text{orth}}_i(\theta^1, \phi^1, \theta^2, \phi^2),
\end{aligned}
\end{equation}
\normalsize
where $\lambda^1$, $\lambda^2$ and $\lambda^{\text{orth}}$ are hyper-parameters that balance different losses. We define the learning of DecAug as the following optimization problem:
\small
\begin{equation}\label{equ:objective}
\min_{\theta, \theta^1, \phi^1, \theta^2, \phi^2, \phi}\, \frac{1}{N}\sum_{i=1}^{N} \mathcal{L}_i(\theta, \theta^1, \phi^1, \theta^2, \phi^2, \phi).
\end{equation}
\normalsize

We summarize the detailed procedures in Algorithm~\ref{alg:method}. And the above objective function is optimized by the stochastic gradient descent (SGD) algorithm.

\subsection{Theoretical Justification of DecAug}

OoD generalization problem is quite complex for theoretical analysis in general case. However, we provide a simplified case to analytically demonstrate why gradient orthogonalization can help. Let us consider the structural equation model (SEM):
\small
\begin{align}
    X_1  &\leftarrow N_1(0, \sigma^2),\nonumber\\
    Y    &\leftarrow X_1 + N_2(0, \sigma^2),\nonumber\\
    X_2  &\leftarrow Y + N_3(0, 1),\nonumber\\
    C    &\leftarrow X_2 + N_4(0, \sigma^2),
\end{align}
\normalsize
where $N_1, N_2, N_3, N_4$ are independent Gaussian distributions. $Y$ and $C$ represent labels for category and context respectively, whereas $X_1$ and $X_2$ represent features for predicting $Y$ and $C$. More specifically, $X_1$ is the causal feature for predicting $Y$, and $X_2$ is the causal feature for predicting $C$, however $X_2$ is also spuriously correlated with $Y$.

Suppose we have a dataset $D=\{(x_1, x_2, y, c)_i\}_{i=1}^{n}$, which contains examples identically and independently distributed according to $P(X_1, X_2, Y, C)$ that is consistent with the above SEM. Now we predict $y$ and $c$ from $(x_1, x_2)$ using two linear predictors:
\small
\begin{align}
    \Hat{y} &= x_1 \alpha_{1,1} + x_2 \alpha_{2,1},\\
    \Hat{c} &= x_1 \alpha_{1,2} + x_2 \alpha_{2,2},
\end{align}
\normalsize
which are learned by minimizing the squared errors. For every training example, $\mathcal{L}^\text{orth}$ is then defined as
\small
\begin{equation}
    \left(\frac{\nabla_{x_1, x_2}(\Hat{y}-y)^2}{\lVert \nabla_{x_1, x_2}(\Hat{y}-y)^2 \rVert} \cdot \frac{\nabla_{x_1, x_2}(\Hat{c}-c)^2}{\lVert \nabla_{x_1, x_2}(\Hat{c}-c)^2 \rVert}\right)^2,
\end{equation}
\normalsize
following the idea of gradient orthogonalization that we propose in DecAug. We assume in the following theorem that $(\Hat{y}-y)(\Hat{c}-c) \neq 0$ for every $(x_1, x_2, y, c) \in D$, otherwise the computation of $\mathcal{L}^\text{orth}$ would involve division by zero. In practice, we can simply ignore these examples while computing the loss over a batch.

\begin{theorem}\label{theorem:alpha21}
    For arbitrary $\sigma \geq 0$, $\mathbb{E}[\mathcal{L}^2]$ and $\mathbb{E}[\mathcal{L}^\textnormal{orth}]$ are both minimized only if $\Hat{y}$ does not predict $y$ from $x_2$, i.e., $\alpha_{2,1} = 0$, even if there is spurious correlation between $X_2$ and $Y$.
\end{theorem}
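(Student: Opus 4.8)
The plan is to reduce the whole statement to elementary linear least squares, since both predictors are linear and both losses are quadratic. First I would unwind the SEM to read off the covariance structure of $(X_1,X_2,Y,C)$ (all means zero): $\mathrm{Var}(X_1)=\sigma^2$, $\mathrm{Cov}(X_1,X_2)=\sigma^2$, $\mathrm{Var}(X_2)=2\sigma^2+1$, $\mathrm{Cov}(X_1,C)=\sigma^2$, $\mathrm{Cov}(X_2,C)=2\sigma^2+1$. The minimizer of $\mathbb{E}[\mathcal{L}^2]=\mathbb{E}[(\hat c-c)^2]$ over $(\alpha_{1,2},\alpha_{2,2})$ solves the normal equations $\Sigma(\alpha_{1,2},\alpha_{2,2})^{\top}=(\mathrm{Cov}(X_1,C),\mathrm{Cov}(X_2,C))^{\top}$, where $\Sigma$ is the covariance matrix of $(X_1,X_2)$. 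The crucial observation is that the right-hand side is precisely the second column of $\Sigma$, so for $\sigma>0$ (where $\Sigma\succ0$ and the solution is unique) the minimizer is $(\alpha_{1,2},\alpha_{2,2})=(0,1)$; this simply reflects $C=X_2+N_4$ with $N_4\perp X_1$, so $X_1$ carries no extra information about $C$ beyond $X_2$.

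Next I would make $\mathcal{L}^{\mathrm{orth}}$ explicit. Since $\hat y,\hat c$ are linear, $\nabla_{x_1,x_2}(\hat y-y)^2=2(\hat y-y)(\alpha_{1,1},\alpha_{2,1})$ and $\nabla_{x_1,x_2}(\hat c-c)^2=2(\hat c-c)(\alpha_{1,2},\alpha_{2,2})$; normalizing each to unit length cancels the scalar prefactors, and squaring the dot product cancels their signs. Hence for every example with $(\hat y-y)(\hat c-c)\ne 0$ and $(\alpha_{1,1},\alpha_{2,1})\ne 0$,
\[
\mathcal{L}^{\mathrm{orth}}=\left(\frac{\alpha_{1,1}\alpha_{1,2}+\alpha_{2,1}\alpha_{2,2}}{\lVert(\alpha_{1,1},\alpha_{2,1})\rVert\,\lVert(\alpha_{1,2},\alpha_{2,2})\rVert}\right)^{2},
\]
which is data-independent, so $\mathbb{E}[\mathcal{L}^{\mathrm{orth}}]$ equals the same cosine-squared between the two coefficient vectors.

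The conclusion is then immediate. At a point where $\mathbb{E}[\mathcal{L}^2]$ is minimized we have $(\alpha_{1,2},\alpha_{2,2})=(0,1)$, so $\mathbb{E}[\mathcal{L}^{\mathrm{orth}}]=\alpha_{2,1}^2/(\alpha_{1,1}^2+\alpha_{2,1}^2)\ge 0$, whose global minimum $0$ (attained e.g.\ at $\alpha_{1,1}\ne 0,\ \alpha_{2,1}=0$) is reached if and only if $\alpha_{2,1}=0$; the remaining degenerate case $(\alpha_{1,1},\alpha_{2,1})=(0,0)$ also has $\alpha_{2,1}=0$. To substantiate the ``even if there is spurious correlation'' clause I would additionally solve the normal equations for predicting $Y$ from $(X_1,X_2)$, obtaining $\alpha_{2,1}=\sigma^2/(1+\sigma^2)>0$: an unregularized least-squares fit of $y$ does use the spurious feature $X_2$, and it is exactly the orthogonality penalty together with a correctly fitted context branch that drives this coefficient to zero.

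The computations are all routine, so there is no deep obstacle; the one place needing care is the boundary case $\sigma=0$, where $\Sigma$ is singular (so $\alpha_{1,2}$ is not pinned down by $\mathbb{E}[\mathcal{L}^2]$) and $N_4\equiv0$ makes $\hat c-c\equiv0$ once $\alpha_{2,2}=1$, conflicting with the hypothesis $(\hat y-y)(\hat c-c)\ne 0$. I would handle that either by restricting to $\sigma>0$ or by a short separate degeneracy remark; everything else rests on the two facts that the best linear predictor of $C$ ignores $X_1$ and that the normalized-gradient orthogonality loss collapses to the cosine-squared between $(\alpha_{1,1},\alpha_{2,1})$ and $(\alpha_{1,2},\alpha_{2,2})$.
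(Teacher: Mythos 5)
Your proposal is correct and follows essentially the same route as the paper: pin down the unique minimizer $(\alpha_{1,2},\alpha_{2,2})=(0,1)$ of $\mathbb{E}[\mathcal{L}^2]$ (you via the normal equations, the paper by expanding the loss through the SEM into $(\alpha_{1,2}+\alpha_{2,2}-1)^2\sigma^2+(\alpha_{2,2}-1)^2(\sigma^2+1)+\sigma^2$), then note that the normalized-gradient loss collapses to the data-independent cosine-squared between the coefficient vectors $(\alpha_{1,1},\alpha_{2,1})$ and $(\alpha_{1,2},\alpha_{2,2})$, which at that minimizer equals $\alpha_{2,1}^2/(\alpha_{1,1}^2+\alpha_{2,1}^2)$ and forces $\alpha_{2,1}=0$. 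Your extra observations --- the unpenalized OLS coefficient $\alpha_{2,1}=\sigma^2/(1+\sigma^2)>0$ substantiating the spurious-correlation clause, and the $\sigma=0$ degeneracy where uniqueness of $\alpha_{1,2}$ and the hypothesis $(\hat y-y)(\hat c-c)\neq 0$ break down --- are correct refinements that the paper's proof glosses over.
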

Here we briefly sketch the proof of Theorem \ref{theorem:alpha21}. For the full proof, please refer to Appendix. First, by the definition of $\mathcal{L}^2$ and the SEM given above, we have
\small
\begin{equation}\label{eq:ELc}
    \mathbb{E}[\mathcal{L}^2] = (\alpha_{1,2}+\alpha_{2,2}-1)^2\sigma^2 + (\alpha_{2,2}-1)^2(\sigma^2 + 1) + \sigma^2
\end{equation}
\normalsize
which implies that the minimum value of $\mathbb{E}_D[\mathcal{L}^2]$ is $\sigma^2$, obtained only when $\alpha_{1,2} = 0$ and $\alpha_{2,2} = 1$. Combining this with the definition of $\mathbb{E}[\mathcal{L}^\text{orth}]$, we have
\small
\begin{equation}\label{eq:ELorth}
    \mathbb{E}[\mathcal{L}^{\text{orth}}] = \frac{(\alpha_{1,1}\alpha_{1,2} + \alpha_{2,1}\alpha_{2,2})^2}{(\alpha_{1,1}^2 + \alpha_{2,1}^2)(\alpha_{1,2}^2 + \alpha_{2,2}^2)} = \frac{\alpha_{2,1}^2}{\alpha_{1,1}^2+\alpha_{2,1}^2}
\end{equation}
\normalsize
where the second equality follows from the condition implied by $\mathbb{E}[\mathcal{L}^2] = \sigma^2$. Finally, \eqref{eq:ELorth} is minimized only when $\alpha_{2,1} = 0$.

We provide the proof of theorem for disentangled representations as follows:
\begin{lemma}\label{lemma:ELc-bound}
    $\mathbb{E}[\mathcal{L}^2] \geq \sigma^2$ and the equality holds if and only if $\alpha_{1,2} = 0$ and $\alpha_{2,2} = 1$.
\end{lemma}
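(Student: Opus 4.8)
The plan is to evaluate $\mathbb{E}[\mathcal{L}^2]$ in closed form straight from the SEM, recover the expression \eqref{eq:ELc}, and then simply read off both the bound and the equality case.

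First I would rewrite the residual $\hat{c}-c$ as a linear combination of the mutually independent, mean-zero noises. Using $C = X_2 + N_4$ and $X_2 = Y + N_3 = X_1 + N_2 + N_3$ inside $\hat{c} = x_1\alpha_{1,2} + x_2\alpha_{2,2}$ gives
\[
\hat{c} - c = (\alpha_{1,2}+\alpha_{2,2}-1)\,X_1 + (\alpha_{2,2}-1)(N_2+N_3) - N_4 .
\]
Since $X_1, N_2, N_3, N_4$ are independent with zero mean and variances $\sigma^2, \sigma^2, 1, \sigma^2$ respectively, squaring and taking expectations annihilates every cross term and leaves precisely \eqref{eq:ELc}, namely $\mathbb{E}[\mathcal{L}^2] = (\alpha_{1,2}+\alpha_{2,2}-1)^2\sigma^2 + (\alpha_{2,2}-1)^2(\sigma^2+1) + \sigma^2$.

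Next, because the first two summands are nonnegative, $\mathbb{E}[\mathcal{L}^2]\ge\sigma^2$ is immediate. For the equality condition, equality forces both terms to vanish: $(\alpha_{2,2}-1)^2(\sigma^2+1)=0$ yields $\alpha_{2,2}=1$ since $\sigma^2+1>0$, and substituting this into $(\alpha_{1,2}+\alpha_{2,2}-1)^2\sigma^2=0$ yields $\alpha_{1,2}=0$ when $\sigma>0$; the converse is a one-line check. The only delicate point, which I would flag explicitly, is the degenerate case $\sigma=0$: there $X_1, N_2, N_4$ are almost surely zero, the coefficient multiplying $\alpha_{1,2}$ drops out, and the minimizer becomes $\alpha_{2,2}=1$ with $\alpha_{1,2}$ arbitrary — so the ``only if'' half of the stated characterization should be understood for $\sigma>0$ (equivalently, one restricts to $\sigma>0$, which is the regime of interest for Theorem~\ref{theorem:alpha21}). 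Apart from that caveat, the argument is routine second-moment bookkeeping with no real obstacle.
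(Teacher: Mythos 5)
Your proof is correct and follows essentially the same route as the paper's: expand $\hat{c}-c$ as a linear combination of the independent noises, use independence to kill cross terms, recover the closed form $(\alpha_{1,2}+\alpha_{2,2}-1)^2\sigma^2+(\alpha_{2,2}-1)^2(\sigma^2+1)+\sigma^2$, and read off the bound and equality case. Your extra remark about the degenerate case $\sigma=0$ (where the ``only if'' part of the equality characterization fails because $\alpha_{1,2}$ becomes free) is a valid refinement that the paper's proof silently glosses over despite the theorem being stated for arbitrary $\sigma\geq 0$.
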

\begin{proof}
    First, we expand $\mathbb{E}[\mathcal{L}^2]$ to find out the expectation:
    \small
    \begin{equation}\label{eq:ELc_2}
        \begin{split}
               \mathbb{E}[\mathcal{L}^2] =& \mathbb{E}[( \Hat{c}-c )^2]\\
            =& \mathbb{E}[( x_1 \alpha_{1,2} + x_2 \alpha_{2,2} - c )^2]\\
            =& \mathbb{E}[( N_1\alpha_{1,2} + (N_1 + N_2 + N_3)\alpha_{2,2}\\
            & - (N_1 + N_2 + N_3 + N_4) )^2]\\
            =& \mathbb{E}[( (\alpha_{1,2}+\alpha_{2,2}-1)N_1 + (\alpha_{2,2}-1)N_2\\
            & + (\alpha_{2,2}-1)N_3 - N_4 )^2]\\
            =& (\alpha_{1,2}+\alpha_{2,2}-1)^2\sigma^2 + (\alpha_{2,2}-1)^2(\sigma^2 + 1) + \sigma^2
        \end{split}
    \end{equation}
    \normalsize
    where the last equality follows from the independence of the Gaussian distributions $N_1$, $N_2$, $N_3$, $N_4$. Since all three terms in the result of \eqref{eq:ELc} are non-negative, it is easy to see that $\mathbb{E}[\mathcal{L}^2] \geq \sigma^2$ and the equality holds if and only if 
    \small
    \begin{align}\label{eq:ELc-eq-set}
        (\alpha_{1,2} + \alpha_{2,2} - 1)^2 &= 0,\nonumber\\
        (\alpha_{2,2} - 1)^2 &= 0.
    \end{align}
    \normalsize
    Solving \eqref{eq:ELc-eq-set} finishes the proof.
\end{proof}

\noindent\textbf{Theorem 1} \textit{
    For arbitrary $\sigma \geq 0$, $\mathbb{E}[\mathcal{L}^2]$ and $\mathbb{E}[\mathcal{L}^\textnormal{orth}]$ are both minimized only if $\Hat{y}$ does not predict $y$ from $x_2$, i.e., $\alpha_{2,1} = 0$, even if there is spurious correlation between $X_2$ and $Y$.
}
\begin{proof}
    By the definition of $\mathcal{L}^\textnormal{orth}$, we have
    \small
    \begin{equation}\label{eq:ELorth_2}
        \mathbb{E}[\mathcal{L}^\textnormal{orth}] = \mathbb{E}[(\frac{\nabla_{x_1, x_2}(\Hat{y}-y)^2}{\lVert \nabla_{x_1, x_2}(\Hat{y}-y)^2 \rVert} \cdot \frac{\nabla_{x_1, x_2}(\Hat{c}-c)^2}{\lVert \nabla_{x_1, x_2}(\Hat{c}-c)^2 \rVert})^2].
    \end{equation}
    \normalsize
    Since $\Hat{y} = x_1 \alpha_{1,1} + x_2 \alpha_{2,1}$ and $\Hat{c} = x_1 \alpha_{1,2} + x_2 \alpha_{2,2}$, the gradients in \eqref{eq:ELorth} can be written as
    \small
    \begin{alignat}{5}
        &\nabla_{x_1, x_2}(\Hat{y}-y)^2 &&= (2(\Hat{y}-y) &&\alpha_{1,1},\  2(\Hat{y}-y) &&\alpha_{2,1}),\nonumber\\
        &\nabla_{x_1, x_2}(\Hat{c}-c)^2 &&= (2(\Hat{c}-c) &&\alpha_{1,2},\  2(\Hat{c}-c) &&\alpha_{2,2}).
    \end{alignat}
    \normalsize
    Therefore,
    \small
    \begin{equation}\label{eq:ELorth-alpha}
        \begin{split}
            &\mathbb{E}[\mathcal{L}^\textnormal{orth}] \\
            &= \mathbb{E}[(\frac{4(\Hat{y}-y)(\Hat{c}-c)\alpha_{1,1}\alpha_{2,1} + 4(\Hat{y}-y)(\Hat{c}-c)\alpha_{1,2}\alpha_{2,2}}{\sqrt{4(\Hat{y}-y)^2(\alpha_{1,1}^2+\alpha_{2,1}^2)}\cdot\sqrt{4(\Hat{c}-c)^2(\alpha_{1,2}^2+\alpha_{2,2}^2)}})^2]\\
            &= \mathbb{E}[\frac{(\alpha_{1,1}\alpha_{2,1}+\alpha_{1,2}\alpha_{2,2})^2}{(\alpha_{1,1}^2+\alpha_{2,1}^2)(\alpha_{1,2}^2+\alpha_{2,2}^2)}]\\
            &= \frac{(\alpha_{1,1}\alpha_{1,2} + \alpha_{2,1}\alpha_{2,2})^2}{(\alpha_{1,1}^2 + \alpha_{2,1}^2)(\alpha_{1,2}^2 + \alpha_{2,2}^2)}.
        \end{split}
    \end{equation}
    \normalsize
    By Lemma \ref{lemma:ELc-bound}, $\mathbb{E}[\mathcal{L}^2]$ is minimized if and only if $\alpha_{1,2} = 0$ and $\alpha_{2,2} = 1$, so
    \small
    \begin{equation}\label{eq:alpha-reduce}
        \frac{(\alpha_{1,1}\alpha_{1,2} + \alpha_{2,1}\alpha_{2,2})^2}{(\alpha_{1,1}^2 + \alpha_{2,1}^2)(\alpha_{1,2}^2 + \alpha_{2,2}^2)} = \frac{\alpha_{2,1}^2}{\alpha_{1,2}^2 + \alpha_{2,2}^2}.
    \end{equation}
    \normalsize
    Combining \eqref{eq:ELorth-alpha} and \eqref{eq:alpha-reduce}, we conclude that if $\mathbb{E}[\mathcal{L}^\textnormal{orth}]$ is also minimized at the same time then $\alpha_{2,1} = 0$.
\end{proof}

\section{Illustrative Results}
In this section, we compare DecAug with various methods for image recognition on different OoD datasets: Colored MNIST~\cite{arjovsky2019invariant}, Rotated MNIST~\cite{ghifary2015domain}, PACS~\cite{Li2017}, VLCS~\cite{fang2013unbiased}, and NICO~\cite{he2020towards}. We show that DecAug can deal with different types of OoD generalization challenges and achieve superior performance.

\subsection{Datasets and Implementation Details}

The challenging Colored MNIST dataset was recently proposed by IRM~\cite{arjovsky2019invariant} via modifying the original MNIST dataset with three steps: 1) The original digits ranging from 0 to 4 were relabelled as 0 and the digits ranging from 5 to 9 were tagged as 1; 2) The labels of 0 have a  probability of $25\%$ to flip to 1, and vice versa; 3) The digits were colored either red or green based on different correlation with the labels to construct different environments (e.g., $80\%$ and $90\%$ for the training environments and $10\%$ for the test environment).
In this way, the classifiers will easily over-fit to the spurious feature (e.g., color) in the training environments and ignore the shape feature of the digits.

For a fair comparison, we followed the same experimental protocol as in IRM~\cite{arjovsky2019invariant} on the Colored MNIST dataset.
We equipped the IRMv1 scheme with our \name{} approach
using the same settings.
The backbone network was a three-layer MLP. 
The total training epoch was 500 and the batch size was the whole training data. We used the SGD optimizer with an initial learning rate of 0.1.
The trained model was tested at the final epoch.

This dataset is a variant of the MNIST dataset, where we split the dataset into 3 environments and rotate different angles in each environment. For the training environment, it contains 2 environments, we rotate digits for $60^{\circ}$ in the first environment and $90^{\circ}$ in the second environment. For the testing environment, we rotate digits for $180^{\circ}$. Same as Colored MNIST, The original digits ranging from 0 to 4 were relabelled as 0 and the digits ranging from 5 to 9 were tagged as 1. The labels of 0 have a probability of 25\% to flip to 1, and vice versa. 
In this dataset, we also followed the same experimental protocol as in Colored MNIST dataset.

This dataset contains 4 domains (Photo, Art Painting, Cartoon, Sketch) with 7 common categories (dog,
elephant, giraffe, guitar, horse, house, person). We followed the same leave-one-domain-out validation experimental protocol as in~\cite{Li2017}. For each time, we select three environments for training and the remaining environment for testing.

The backbone network we used on the PACS dataset was the pre-trained ImageNet model ResNet-18. We followed the same training, validation and test split as in JiGen~\cite{carlucci2019domain}.

VLCS contains images of 5 object categories from 4 separated domains.
The backbone network was pre-trained ImageNet model ResNet-18 . We followed the same training, validation and test split as in JiGen~\cite{carlucci2019domain}. The number of training epochs was 100 and the optimizer is SGD. For each algorithm, we use hyperparameter optimization for tuning the model. We conducted a random search of 10 trials over the hyperparameter distribution in DomainBed~\cite{gulrajani2020search},  selected hyperparameters with best test accuracy, and report the mean test accuracy after 3 runs using chosen hyperparameters.

\begin{table}[!t]
    \centering
    \caption[Results of DecAug compared with different methods on the Colored MNIST dataset.]{Results of our approach compared with different methods on the Colored MNIST dataset (mean $\pm$ std deviation).} 
    \label{table:cmnist}
	\begin{adjustbox}{max width=0.55\textwidth}
	\begin{threeparttable}
        \begin{tabular}{ll}
            \toprule
            \toprule
            Model & Acc test env\\
            \midrule
            ERM~\cite{arjovsky2019invariant}  &  17.10 $\pm$ 0.6 \\
            IRM~\cite{arjovsky2019invariant} &  66.90 $\pm$ 2.5 \\
            REx~\cite{krueger2020outofdistribution} &  68.70 $\pm$ 0.9 \\
            F-IRMGames~\cite{ahuja2020invariant}  & 59.91 $\pm$ 2.7 \\
            V-IRMGames~\cite{ahuja2020invariant} & 49.06 $\pm$ 3.4 \\
            ReBias~\cite{bahng2019rebias}\tnote{*} & 29.40 $\pm$ 0.3\\
            JiGen~\cite{carlucci2019domain}\tnote{*}  &11.91 $\pm$ 0.4\\
            Mixup~\cite{zhang2017mixup}\tnote{*} & 28.57 $\pm$ 5.3 \\
            DANN~\cite{ganin2016domain}\tnote{*} &  25.20 $\pm$ 1.2 \\
            \midrule
            \emph{DecAug} & \emph{\textbf{69.60 $\pm$ 2.0}}\\
            ERM, grayscale model (oracle) &  73.00 $\pm$ 0.4  \\
            Optimal invariant model (hypothetical) &75.00 \\
            \bottomrule
            \bottomrule
        \end{tabular}
        \begin{tablenotes}
		    \item[*] Implemented by ourselves.
		\end{tablenotes}
    \end{threeparttable}
    \end{adjustbox}
    \vspace{-0.3cm}
\end{table}

This dataset contains 19 classes with 9 or 10 different contexts, 
i.e., different object poses, positions, backgrounds, and movement patterns, etc. The NICO dataset is one of the newly proposed OoD generalization benchmarks in the real scenarios~\cite{he2020towards}.
The contexts in the validation and test set will not appear in the training set.

The backbone network was ResNet-18 without pretraining on the NICO dataset. The number of training epochs was 500 and the batch size was 128.
We used the SGD optimizer with a learning rate of 0.05. Similar to the VLCS dataset, we also used hyperparameter optimization for tuning the model.

We compare our proposed \name{} with the state-of-the-arts, including empirical risk minimization  (ERM), risk regularization (IRM~\cite{arjovsky2019invariant}, REx~\cite{krueger2020outofdistribution}, IRM-Games~\cite{ahuja2020invariant}), meta learning 
(MASF~\cite{dou2019domain}, MLDG~\cite{li2018learning}), self-supervised learning  (JiGen~\cite{carlucci2019domain}), adversarial feature learning (CORAL~\cite{sun2016deep},
DANN~\cite{ganin2016domain},
MMD~\cite{li2018domain}
), robustness method (DRO~\cite{sagawa2019distributionally}), data augmentation  (Mixup~\cite{zhang2017mixup}, CuMix~\cite{mancini2020towards}), transfer learning (MTL~\cite{blanchard2017domain}),
debiased training  (ReBias~\cite{bahng2019rebias}), and convnets with batch balancing  (CNBB~\cite{he2020towards}) across multiple datasets.

Our framework was implemented with PyTorch 1.1.0, CUDA v9.0.
For the baseline methods, we implement either with Pytorch 1.1.0 or with Tensorflow 1.8 to keep the same setting as their original source code. IRM-Games and MASF were implemented with Tensorflow.The remaining algorithms were implemented with Pytorch.
We conducted experiments on NVIDIA Tesla V100. 

\begin{table}[!t]
    \centering
    \caption[Results of DecAug compared with existing approaches on the Rotated MNIST dataset.]{Results of our approach compared with different methods on the Rotated MNIST dataset (mean $\pm$ std deviation). All methods are implemented by ourselves.} 
    \label{table:rmnist}
	\begin{adjustbox}{max width=0.75\textwidth}
	\begin{threeparttable}
        \begin{tabular}{ll}
            \toprule
            \toprule
            Model & Acc test env\\
            \midrule
            ERM~\cite{arjovsky2019invariant}  &  52.84 $\pm$ 1.2 \\
            IRM~\cite{arjovsky2019invariant} &  53.23 $\pm$ 1.1 \\
            REx~\cite{krueger2020outofdistribution} &52.89 $\pm$ 1.6 \\
            F-IRMGames~\cite{ahuja2020invariant}  & 50.32 $\pm$ 2.7 \\
            V-IRMGames~\cite{ahuja2020invariant} & 50.49 $\pm$ 0.5 \\
            JiGen~\cite{carlucci2019domain}  &50.32 $\pm$ 2.8\\
            Mixup~\cite{zhang2017mixup} &  51.88 $\pm$ 1.2	 \\
            MLDG~\cite{li2018learning}   &  50.81 $\pm$ 1.4	 \\
            DRO~\cite{sagawa2019distributionally}    &  51.94 $\pm$ 1.1	 \\
            \midrule
            \emph{DecAug} & \emph{\textbf{53.87 $\pm$ 0.9}}\\
            \bottomrule
            \bottomrule
        \end{tabular}
        
    \end{threeparttable}
    \end{adjustbox}
    \vspace{-0.3cm}
\end{table}

\subsection{Results and Discussion}
\label{exp:results}
In this section, the results of our approach on five datasets: Colored MNIST, Rotated MNIST, PACS, VLCS and NICO will be evaluated and analyzed.
To provide more thorough studies on OoD generalization compared with previous work, these five datasets represent different aspects of covariant shifts including diversity shift and correlation shift in OoD problems.

DecAug achieves the best generalization performance on Colored MNIST as shown in Table~\ref{table:cmnist}, followed by REx and IRM which are risk regularization methods. As for representative domain generalization methods, such as JiGen and MLDG, they fail to generalize in the test environment for they are misled by the spurious correlation existing in the training datasets. In the grayscale MNIST dataset, DecAug's performance is very close to ERM, which provides an upper bound for the MLP network to generalize on this task. Typical generalization methods only consider dealing with diversity shift, which is one dimension in OoD problem where image style differs, as mentioned in \cite{arjovsky2019invariant}. However, our proposed method, by using decomposition and semantic augmentation in the feature space, greatly improves the performance in Colored MNIST. This is because decomposition and semantic augmentation will disregard spurious features that are correlated but not causal for predicting category.

In the Rotated MNIST dataset, DecAug achieves the best performance followed by IRM and REx. The result is shown in Table \ref{table:rmnist}. Different from results on the  Colored MNIST dataset, ERM can generalize well to the third environment with an accuracy of 52\% in the test environment. The typical risk regularization method obtains slightly better performance due to the correlation between rotation and labels are detected by these algorithms. As for JiGen, it shuffles parts of the original figure and thus is more sensitive to changes of rotation.
For our proposed DecAug, it decomposes the shape of digits and background feature, so it is able to generalize to a new domain where digits are rotated with much difference compared with training.

\begin{table}[!t]
    \centering
    \caption{Classification accuracy on the PACS dataset with ResNet-18. 
    }
    \label{table:pacs}
	\begin{adjustbox}{max width=0.55\textwidth}
	\begin{threeparttable}
        \begin{tabular}{lcccc|c}
        \toprule
        \toprule
         Model  &A  &C  &S  &P  &Average\\
        \midrule
        ERM~\cite{arjovsky2019invariant}    &77.85 	   &74.86    &67.74   &95.73  &79.05\\
        IRM~\cite{arjovsky2019invariant}\tnote{*}    &70.31 	       &73.12    &75.51   &84.73  &75.92\\
        REx~\cite{krueger2020outofdistribution}\tnote{*} &76.22&73.76&66.00&95.21&77.80\\
        JiGen~\cite{carlucci2019domain}   &  79.42 & 75.25 &71.35& \textbf{96.03} &  80.51\\
        Mixup~\cite{zhang2017mixup}\tnote{*}   & 82.01  & 72.58  & 72.48& 93.29  & 80.09  \\
        CuMix~\cite{mancini2020towards}  &  \textbf{82.30}	 & 76.50 & 	72.60 & 95.10 &  81.60\\
        MLDG~\cite{li2018learning}      &79.50         &77.30    &71.50   &94.30  &80.70\\
        MMD~\cite{li2018domain}
        \tnote{*}	&	79.34&	73.76&	72.61 &94.19&	79.97\\
        \midrule
        \emph{DecAug}   & \emph{79.00}	 & \textbf{\emph{79.61}} & 75.64 & \emph{95.33} &  \textbf{\emph{82.39}}\\
        \bottomrule
        \bottomrule
        \end{tabular}
        \begin{tablenotes}
		    \item[*] Implemented by ourselves.
		\end{tablenotes}
    \end{threeparttable}
    \end{adjustbox}
    \vspace{-0.3cm}
\end{table}

In the PACS dataset, DecAug achieves the \textbf{\emph{state-of-the-art (SOTA)}} performance followed by Cumix and MASF when using ResNet-18 as the backbone network. The detail of our results on the PACS dataset is shown in Table~\ref{table:pacs}. For risk regularization methods, such as IRM and REx, they have degenerated performance in PACS compared to two variants of  the  MNIST dataset. This is because strong regularization terms are added in ERM to eliminate all unstable features across different environments. This technique can work well in the standard and “clean" dataset—-MNIST, where shapes of digits are relatively stable. However, in realistic scenarios, the shape of target objects can vary, so in different training environments, features for predicting object category can be unstable.

Different from PACS, whose domains are divided according to style, VLCS has only one specific style: Photo. 
In VLCS, the casual feature shape is relatively stable, this explains why risk regularization methods, such as IRM and REx, can achieve better results than ERM. As shown in Table~\ref{table:vlcs}, DecAug achieves the \textbf{\emph{state-of-the-art (SOTA)}} performance followed by CuMix. Besides, many DG methods achieve better results than ERM, which matches our expectation.

\begin{table}[!t]
    \centering
    \caption[Classification accuracy on the VLCS dataset with ResNet-18.]{Classification accuracy on the VLCS dataset with ResNet-18. All methods are implemented by ourselves.}
    \label{table:vlcs}
    \begin{adjustbox}{max width=0.55\textwidth}
        \begin{threeparttable}
            \begin{tabular}{lcccc|c}
                \toprule
                \toprule
                Model & V & L & C & S & Average\\
                \midrule
                ERM~\cite{arjovsky2019invariant}	   &72.85	&60.97	&97.87	&67.51	&74.80 \\
                IRM~\cite{arjovsky2019invariant}	&74.43	&64.74	&97.40	&67.20	&75.94\\
                REx~\cite{krueger2020outofdistribution}	&\textbf{76.11}	&62.60	&97.87	&65.48	&75.51\\
                Mixup~\cite{zhang2017mixup}	&71.86	&64.74	&98.11	&68.02	&75.68 \\
                CuMix~\cite{mancini2020towards}  &73.34&	66.49&	97.40&	69.74&	76.74 \\
                MLDG~\cite{li2018learning} &70.87&61.73	&91.03&\textbf{73.40}&	74.25\\
                MMD~\cite{li2018domain}
                &71.86	&63.11	&97.40	&69.64	&75.50\\
                DRO~\cite{sagawa2019distributionally}	&71.07	&62.73	&\textbf{99.52}	&70.25	&75.89\\
                \midrule
                \emph{DecAug}	&74.23 &\textbf{\emph{69.00}}	&98.58	&69.54	&\textbf{\emph{77.84}} \\
                \bottomrule
                \bottomrule
            \end{tabular}%
            \begin{tablenotes}
                \item
            \end{tablenotes}
        \end{threeparttable}
    \end{adjustbox}
    \vspace{-0.3cm}
\end{table}%

The recently proposed NICO dataset take more realistic generalization scenarios into consideration where foreground objects and backgrounds, \ie, contexts in the dataset have great difference. For example, there are pictures where a dog is on the grass in the training dataset, while a dog is on the beachside in the test dataset. 
We implemented the algorithms listed in Table~\ref{table:nico}. As shown, the proposed DecAug achieved the best generalization performances on two sets, with 85.23\% on animal and 80.12\% on vehicle, followed by JiGen. CuMix achieved 76.78\% (animal) and 74.74\% (vehicle) accuracy on NICO, indicating that mixing up (interpolating) data may fail to correct the spurious correlation between irrelevant features such as the background to the predicted category. The poor performance of DANN and IRM on NICO may probably due to the diversity shift.
Experiments on the NICO dataset further demonstrate the superiority of the proposed algorithmic framework. \emph{Our method has achieved the SOTA performance simultaneously on various OoD generalization tasks, indicating a new promising direction for OoD learning algorithm research}.

Two important features are taken into consideration in our work, which is the causal feature and correlated feature. The former is we want to focus on, because only causal feature determines the category of an object. The latter feature is what we want to ignore. In our experiment, IRM has good performance among three datasets: Colored MNIST, Rotated MNIST and VLCS. JiGen has good performance on Rotated MNIST, PACS, VLCS and NICO dataset simultaneously, with the worst performance in Colored MNIST(worse than the ERM). We speculate the performance of an algorithm varies among different datasets is due to different characteristics of datasets. Labels of Colored MNIST have a strong correlation with colors, the remaining datasets, however, do not have this characteristic. JiGen does not eliminate the correlation between label and colors in Colored MNIST, and in the meanwhile, the causal feature, shape, is randomly changed by this algorithm, which may result in a worse result than the baseline ERM method.

\begin{table}[!t]
    \centering
    \caption[Results of DecAug compared with different existing methods on NICO dataset.]{Results of DecAug compared with different methods on the NICO dataset. All methods are implemented by ourselves.}
    \label{table:nico}
	\begin{adjustbox}{max width=1.0\textwidth}
	\begin{threeparttable}
        \begin{tabular}{lcc|c}
        \toprule
        \toprule
        Model & Animal &  Vehicle&Average \\
        \midrule
        ERM~\cite{arjovsky2019invariant}	&75.87&	74.52&75.19\\
        IRM~\cite{arjovsky2019invariant}	& 69.63 & 64.94&67.28	 \\
        REx~\cite{krueger2020outofdistribution} &74.31& 66.10 &70.21 \\
        Mixup~\cite{zhang2017mixup}	 & 80.27&77.00&78.63	 \\
        Cumix~\cite{mancini2020towards}   &76.78& 74.74&75.76\\
        MTL~\cite{blanchard2017domain}  & 78.89&	75.11&77.00	 \\
        MMD~\cite{li2018domain}
        &70.91 & 68.04&69.47\\
        DRO~\cite{sagawa2019distributionally}  &77.61&74.59&76.10 		 \\
        CNBB~\cite{he2020towards}   & 78.16& 77.39&77.77\\
        \midrule
        \emph{DecAug} & \emph{\textbf{85.23}} & \emph{\textbf{80.12}}&\emph{\textbf{82.67}}\\
        \bottomrule
        \bottomrule
        \end{tabular}
     \end{threeparttable}
     \end{adjustbox}
     \vspace{-0.3cm}
\end{table}

\begin{figure*}[ht]
    \centering
    \includegraphics[height = 10.1cm]{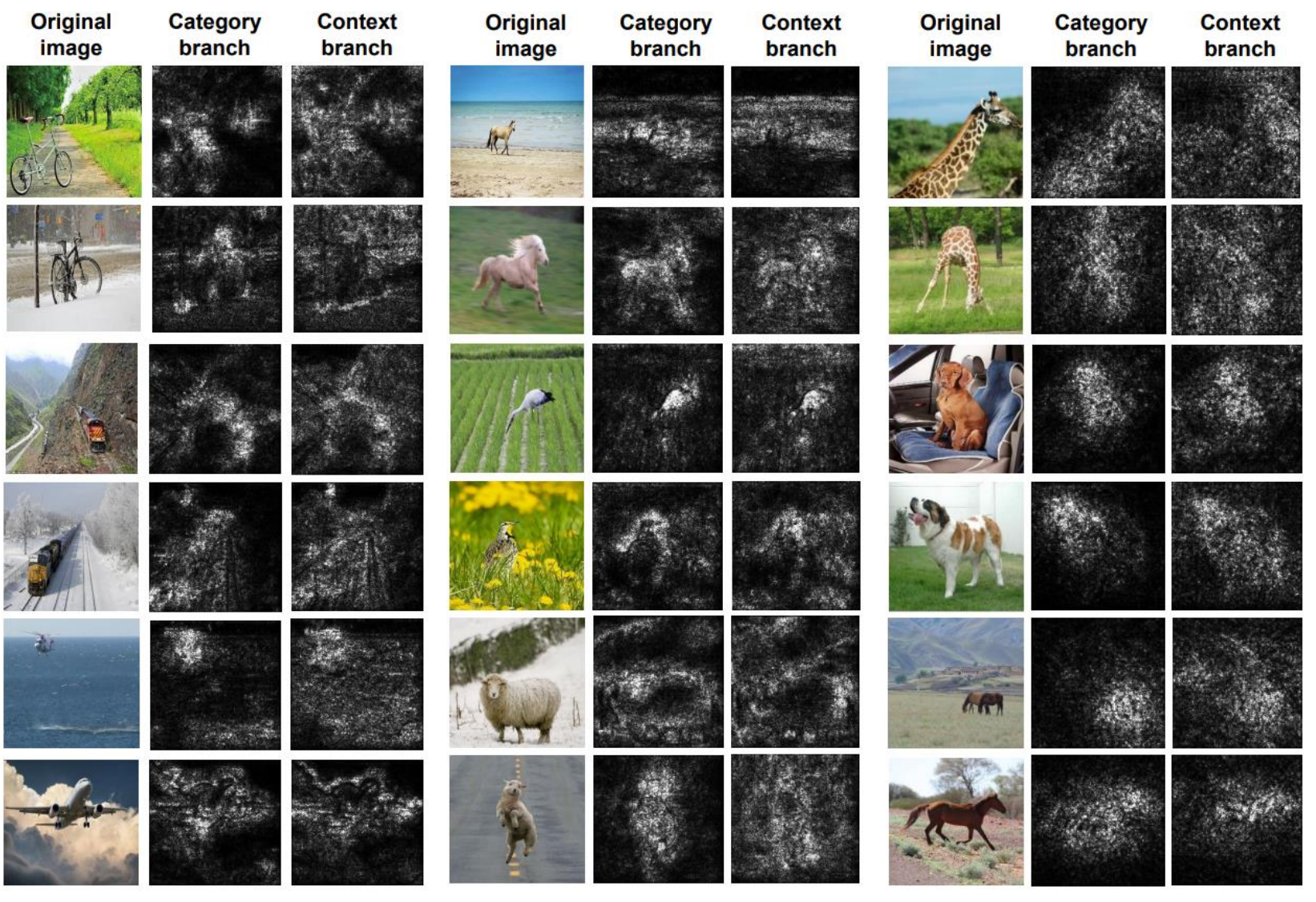}
    \caption[The gradient visualization of the decomposed category-related and context-related features.]{The gradient visualization of the decomposed category-related and context-related high-dimensional features. The first row is the original input images, the second row is its corresponding back propagation of the category branch and the last row is the back propagation of the context branch.}
    \label{fig:visualize}
\end{figure*}

\begin{table}[t]
    \centering
    \caption{Ablation study on the PACS dataset with ResNet-18.}
    \label{table:pacs_ablation}
	\begin{adjustbox}{max width=0.55\textwidth}
        \begin{tabular}{lcccc|c}
        \toprule
        \toprule
         Model &  A &   C &  S  &  P  &  Average\\
        \midrule
        DecAug (orth 0)  &  78.42	 & 78.32 & 	72.13 & 94.19 &  80.77\\
        DecAug (orth 0.0005) & 77.49 & 77.43 & 74.32 & 94.07& 80.76\\
        DecAug (orth 0.001) & 78.12 & 77.34 & \textbf{76.97} & 94.91 & 81.83\\
        DecAug (orth 0.01) &  \emph{\textbf{79.00}} & \emph{\textbf{79.61}} & 	\emph{75.64} & 	\emph{\textbf{95.33}} &  \emph{\textbf{82.39}}\\
        \bottomrule
        \bottomrule
        \end{tabular}
    \end{adjustbox}
    \vspace{-0.3cm}
\end{table}

\subsection{Ablation Studies and Sensitivity Analysis}
In this section, ablation studies and sensitivity analysis are presented to evaluate the effectiveness of \name{}.
Firstly, we test the effects of orthogonal loss. Then we tried different variants of \name{}, which shows the two branch architecture is needed and the gradient-based augmentation is better to perform on context branch. We also compared our method with vanilla multi-task learning. Furthermore, we use interpretability methods to understand the decomposed high-level representations.

The results are shown in Table~\ref{table:pacs_ablation}. It can be seen that without the orthogonal loss, our method achieves an average accuracy of 80.77\% that is higher than most of the methods in Table~\ref{table:pacs}. This is because the category and context losses also play the role of feature decomposition. The additional orthogonal loss enforces the gradients of the category and context losses to be orthogonal to each other, which helps to further decompose the features. As expected, with the increase of the orthogonal regularization coefficient $\lambda^{\text{orth}}$, the performance of DecAug can be improved. The experimental results confirm the effectiveness of the proposed orthogonal loss.

\begin{table}[!t]
    \centering
    \caption{Results of DecAug variants on the PACS dataset.}
    \label{table:orthfeatures}
	\begin{adjustbox}{max width=1.0\textwidth}
	\begin{threeparttable}
        \begin{tabular}{lc}
        \toprule
        \toprule
        Model & Average  \\
        \midrule
        Multi-task learning & 77.06 \\
        DecAug (DANN loss) & 81.00  \\
        DecAug (adversarial loss) & 76.37 \\
        DecAug (orth between features) & 79.90  \\
        DecAug (gradient-based orth)& \emph{\textbf{82.39}}  \\
        \bottomrule
        \bottomrule
        \end{tabular}
     \end{threeparttable}
     \end{adjustbox}
     \vspace{-0.3cm}
\end{table}

Following the same leave-one-domain-out validation protocol as in the previous works on PACS, we evaluate the proposed DecAug without the concatenate operation. 
The balance weight for the category branch and the context branch is one. Note that we also involve the orthogonal constraints in this experiment.
As can be seen in Table~\ref{table:concat_ablation}, the application of features concatenation achieves the best performance.

\begin{table}[!t]
    \centering
    \caption{Ablation study for features concatenate on PACS with ResNet-18.}
    \label{table:concat_ablation}
	\begin{adjustbox}{max width=0.8\textwidth}
        \begin{tabular}{lcccc|c}
        \toprule
        \toprule
         PACS &  Art Painting &   Cartoon &  Sketch  &  Photo  &  Average\\
        \midrule
        DecAug with concat & \textbf{79.00} & \textbf{79.61} & \textbf{75.64} & \textbf{95.33} & \textbf{82.39} \\
        DecAug without concat & 75.83 & 76.24 & 75.57 & 94.79 & 80.61 \\
        \bottomrule
        \bottomrule
        \end{tabular}
    \end{adjustbox}
\end{table}

We changed current orthogonal regularization to orthogonal constraints between features, refer to Table~\ref{table:orthfeatures}, which reaches 79.90\% on PACS, lower than the original DecAug. We also tried confusion regularization, as discussed in recent literature Open Compound Domain Adaptation~\cite{liu2020open}. It seems natural to incorporate confusion regularization into our method for better decomposition. However, after many trials, no improvements were observed.

We tried DecAug with DANN adversarial loss "orth" on PACS. As shown in Table~\ref{table:orthfeatures}, the result is around 81\%, lower than the original. This shows both gradient orthogonalization and semantic augmentation are indispensable parts of the algorithm. We tried "adversarial augmentation" to Jigsaw, the result is much lower than Jigsaw. This shows that the two branch architecture is needed and adversarial augmentation is better performed on the context predicting branch to improve OoD generalization via challenging neural networks to unseen context information.

In addition, we tried to add adversarial loss on the context branch. The resulting accuracy on PACS is 76.37\%, far lower than DecAug (82.41\%), indicating that adversarial loss may not able to discourages the representation from being predictive of the context labels under OoD settings.\\

The setting of OoD generalization is different from multi-task learning, in that OoD generalization focus on overcoming the spurious correlation and generalizing to unseen domains, whereas multi-task learning aims to achieve the Pareto-optimality across different tasks. Besides, a key difference is that DecAug applies gradient orthogonalization on the semantic feature z instead of network parameters $\theta$ that leads to Pareto-optimality in multi-task setting whereas, in our case, this disentangles the semantic features for category prediction and context prediction. We also compared DecAug with a vanilla multi-task learning, where two classifiers are used to classify categorical and context branches, separately. The resulting test accuracy in PACS is 77.06\%, which is much lower than DecAug (82.41\%). \\

\begin{figure}[t]
    \centering
    \includegraphics[height = 8.5cm]{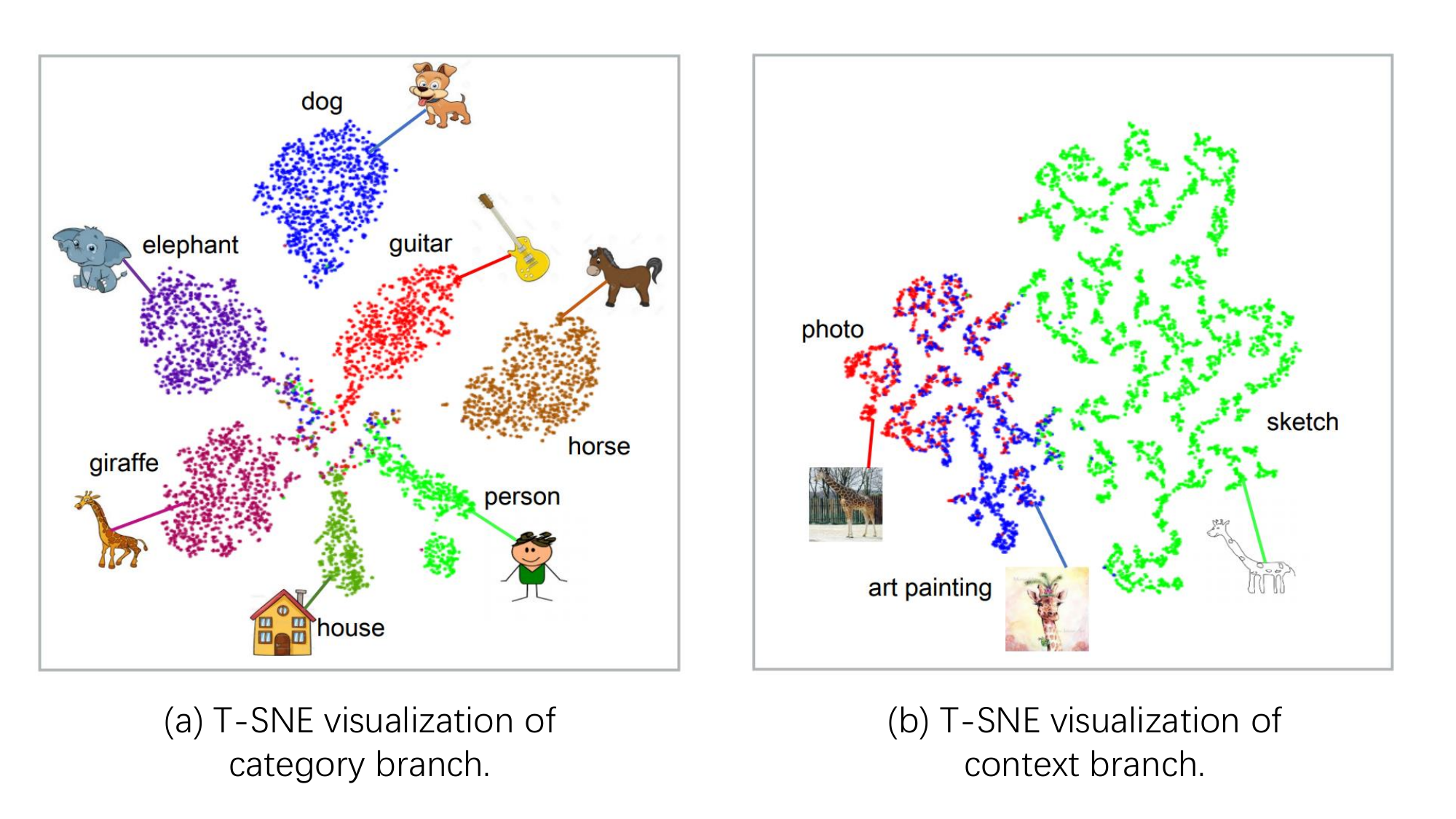}
    \caption[t-SNE visualization of the decomposed high-dimensional features.]{The t-SNE visualization of the decomposed high-dimensional category-related and context-related features. (a) Embedding of category branch versus category labels. (b) Embedding of context branch versus context labels. The difference between (a) and (b) shows the high-level category-related and context-related features are well decomposed.}
    \label{fig:t-SNE}
\end{figure}

We use deep neural network interpretability methods in \cite{Bengio2018_sanity} to explain the neural network's classification decisions as shown in Figure~\ref{fig:visualize}. It can be seen that the saliency maps of the category branch focus more on foreground objects, while the saliency maps of the context branch are also sensitive to background contexts that contain domain information. 

We further conducted t-SNE analysis of the representations of context branch versus the categorical labels. The t-SNE analysis showed that different classes in the categorical branch mix together and cannot tell the class clusters, indicating that the context branch consists of little information of the categorical labels (as shown in Figure~\ref{fig:t-SNE}).
This shows that our method well decomposes the high-level representations into two features that contain category and context information respectively. Later, by performing semantic augmentation on context-related features, our model breaks the inherent relationship between contexts and category labels and generalizes to unseen combinations of foregrounds and backgrounds. 
From Figure~\ref{fig:visualize}, we can see that the category branch learns more about the causal features for predicting categories, and the context branch learns more about the background. This further shows the effectiveness of our method to decompose images.

\chapter{NAS-OoD: Neural Architecture Search for Out-of-Distribution
Generalization}

\section{Introduction} \label{sec:intro}

\begin{figure}
    \centering
    \includegraphics[width=0.73\linewidth]{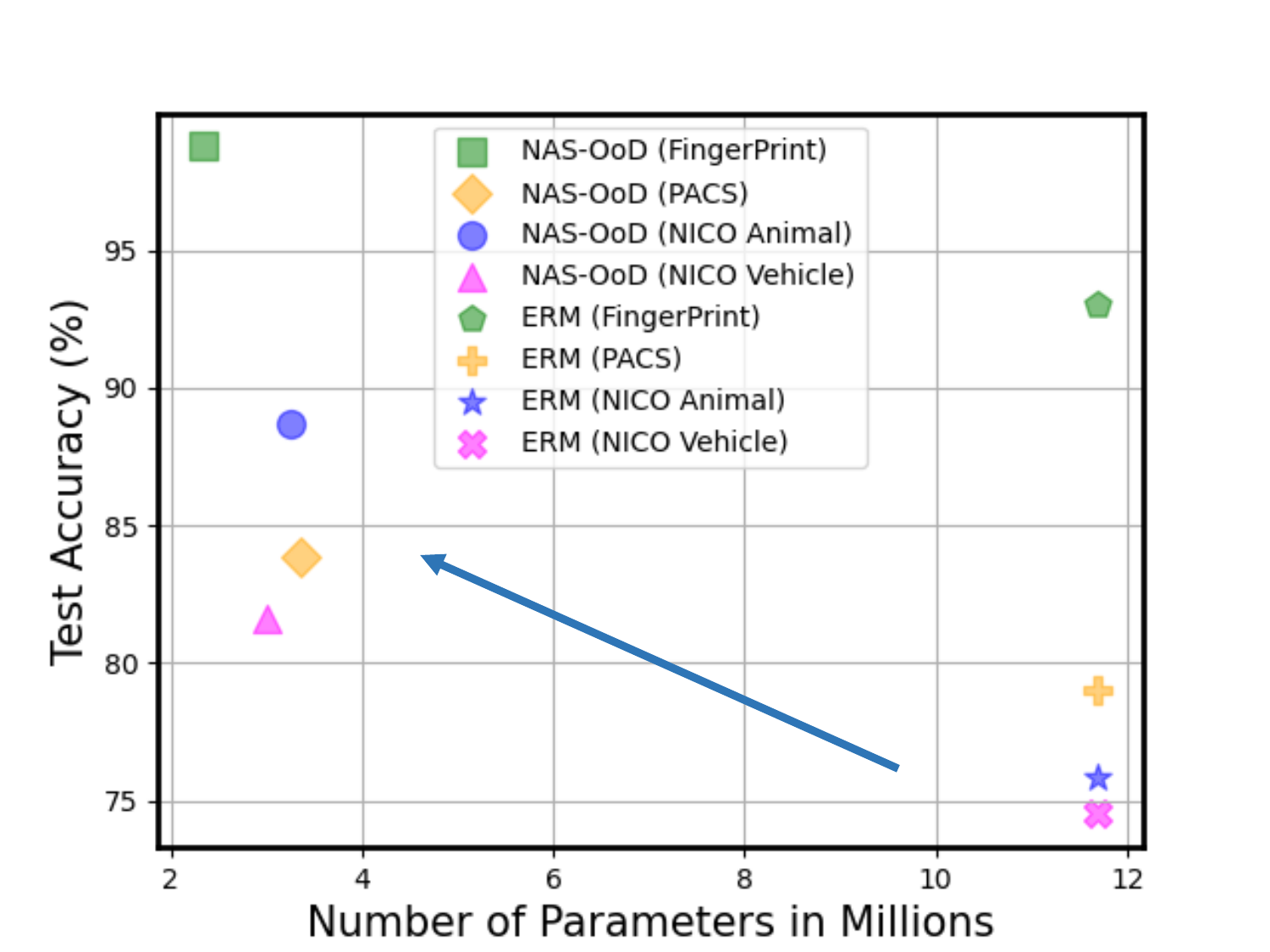}
    \caption{NAS-OoD performs significantly better than existing OoD generalization baselines in terms of test accuracy and network parameter numbers. The upper left points are better than lower right ones
    because they have higher test accuracy and lower parameter numbers.}
    \label{fig:testacc}
\end{figure}

Deep learning models have encountered significant performance drop in Out-of-Distribution (OoD) scenarios~\cite{bahng2019rebias,krueger2020outofdistribution},
where test data come from a distribution different from that of the training data. 
With their growing use in real-world applications in which mismatches of test and training data distributions are often observed~\cite{koh2020wilds}, extensive efforts have been devoted to improving generalization ability
~\cite{Li2017,arjovsky2019invariant,he2020towards,bai2020decaug}. Risk regularization methods~\cite{arjovsky2019invariant,ahuja2020invariant,xie2020risk}
aim to learn invariant representations across different training environments by imposing different invariant risk regularization. Domain generalization methods~\cite{Li2017,li2017learning,carlucci2019domain,zhou2020learning} learn models from multiple domains such that they can generalize well to unseen domains. Stable learning~\cite{Kuang2018,kuang2020stable,he2020towards} focuses on identifying stable and causal features for predictions.
Existing works, however, seldom consider the effects of architectures on generalization ability. 
On the other hand, some pioneer works suggest that different architectures show varying OoD generalization abilities~\cite{hendrycks2020pretrained, dapello2020simulating, li2020network}.
How a network's architecture affects its ability to handle OoD distribution shifts is still an open problem.

Conventional Neural Architecture Search (NAS) methods search for architectures with maximal predictive performance on the validation data that are randomly divided from the training data~\cite{zoph2017neural,pham2018efficient,liu2018darts,yang2020ista}. The discovered architectures are supposed to perform well on unseen test data under the assumption that data are Independent and Identically Distributed (IID).
While novel architectures discovered by recent NAS methods have demonstrated superior performance on different tasks with the IID assumption~\cite{tan2019efficientnet,ghiasi2019fpn,yao2019sm,hu2020count}, they may suffer from over-fitting in OoD scenarios, where the test data come from another distribution. 
A proper validation set that can evaluate the performance of architectures on the test data with distribution shifts is crucial in OoD scenarios.

In this paper, we propose robust NAS for OoD generalization (NAS-OoD) that searches architectures with maximal predictive performance on OoD examples generated by a conditional generator. An overview of the proposed method is illustrated in Figure~\ref{fig:framework}.
To do NAS and train an OoD model simultaneously, we follow the line of gradient-based methods for NAS~\cite{liu2018darts,xie2018snas,cai2018proxylessnas,hu2020dsnas,yang2020ista}, however, we extend that on several fronts. The discrete selection of architectures is relaxed to be differentiable by building all candidate architectures into a supernet with parameter sharing and adopting a softmax choice over all possible network operations. The goal for architecture search is to find the optimal architecture parameters that minimize the validation loss under the condition that the corresponding network parameters minimize the training loss. 

Instead of using part of the training set as the validation set, we train a conditional generator to map the original training data to synthetic OoD examples as the validation data. The parameters of the generator are updated to maximize the validation loss computed by the supernet. This update encourages the generator to synthesize data having a different distribution from the original training data since the supernet is optimized to minimize the error on the training data. To search for the architectures with optimal OoD generalization ability, the architecture parameters are optimized to minimize the loss on the validation set containing synthetic OoD data. This minimax training process effectively drives both the generator and architecture search to improve their performance and finally derive the robust architectures that perform well for OoD generalization.

Our main contributions can be summarized as follows:
\begin{enumerate}
    \item To the best of our knowledge, NAS-OoD is the first attempt to introduce NAS for OoD generalization, where a conditional generator is jointly optimized to synthesize OoD examples helping to correct the supervisory signal for architecture search.
    \item NAS-OoD gets the optimal architecture and all optimized parameters in a single run. The minimax training process effectively discovers robust architectures that generalize well for different distribution shifts.
    \item We take the first step to understanding the OoD generalization of neural network architectures systematically. We provide a statistical analysis of the searched architectures and our preliminary practice shows that architecture does influence OoD robustness.
    \item Extensive experimental results show that NAS-OoD outperforms the previous SOTA methods 
    and achieves the best overall OoD generalization performance on various types of OoD tasks with the discovered architectures having a much fewer number of parameters.
\end{enumerate}

\section{Related Work} \label{sec:rela}

\subsection{Out-of-Distribution Generalization}
\label{rela:ood}
Data distribution mismatches between training and testing set exist in many real-world scenes. Different methods have been developed to tackle OoD shifts. IRM~\cite{arjovsky2019invariant} targets to extract invariant representation from different environments via an invariant risk regularization. IRM-Games~\cite{ahuja2020invariant} aims to achieve the Nash equilibrium among multiple environments to find invariants based on ensemble methods. REx~\cite{krueger2020outofdistribution} proposes a min-max procedure to deal with the worst linear combination of risks across different environments. MASF~\cite{dou2019domain} adopts a framework to learn invariant features among domains. JiGen~\cite{carlucci2019domain} jointly classifies objects and solves unsupervised jigsaw tasks. CuMix~\cite{mancini2020towards} aims to recognize unseen categories in unseen domains through a curriculum procedure to mix up data and labels from different domains. DecAug~\cite{bai2020decaug} proposes a decomposed feature representation and semantic augmentation approach to address diversity and distribution shifts jointly. The work of \cite{hendrycks2019using} finds that using pre-training can improve model robustness and uncertainty. However, existing OoD generalization approaches seldom consider the effects of architecture which leads to suboptimal performances. In this work, we propose NAS-OoD, a robust network architecture search method for OoD generalization.

\subsection{Neural Architecture Search}
EfficientNet~\cite{tan2019efficientnet} proposes a new scaling method that uniformly scales all dimensions of depth, width, and resolution via an effective compound coefficient. EfficientNet design a new baseline which achieves much better accuracy and efficiency than previous ConvNets. One-shot NAS~\cite{bender2018understanding} discusses the weight sharing scheme for one-shot architecture search and shows that it is possible to identify promising architectures without either hypernetworks or RL efficiently. DARTS~\cite{liu2018darts} presents a differentiable manner to deal with the scalability challenge of architecture search. ISTA-NAS~\cite{yang2020ista} formulates neural architecture search as a sparse coding problem. In this way, the network in search satisfies the sparsity constraint at each update and is efficient to train. SNAS~\cite{xie2018snas} reformulates NAS as an optimization problem on parameters of a joint distribution for the search space in a cell. DSNAS~\cite{hu2020dsnas} proposes an efficient NAS framework that simultaneously optimizes architecture and parameters with a low-biased Monte Carlo estimate. NASDA~\cite{li2020network} leverages a principle framework that uses differentiable neural architecture search to derive optimal network architecture for domain adaptation tasks.
NADS~\cite{ardywibowo2020nads} learns a posterior distribution on the architecture search space to enable uncertainty quantification for better OoD detection and aims to spot anomalous samples.
The work \cite{chen2020robustness} uses a robust loss to mitigate the performance degradation under symmetric label noise. However, 
NAS overfits easily, the work \cite{yang2019evaluation, guo2020single} points out that NAS evaluation is frustratingly hard. Thus, it is highly non-trivial to extend existing NAS algorithms to the OoD setting.

\subsection{Robustness from Architecture Perspective}
\label{rela:arch}
Recent studies show that different architectures present different generalization abilities.
The work of \cite{zhang2021can} uses a functional modular probing method to analyze deep model structures under the OoD setting.
The work \cite{hendrycks2020pretrained} examines and shows that pre-trained transformers achieve not only high accuracy on in-distribution examples but also improvement of out-of-distribution robustness. The work \cite{dapello2020simulating} presents CNN models with neural hidden layers that better simulate the primary visual cortex improve robustness against image perturbations. The work \cite{dosovitskiy2020image} uses a pure transformer applied directly to sequences of image patches, which performs quite well on image classification tasks compared with relying on CNNs. The work of \cite{dong2020adversarially} targets to improve the adversarial robustness of the network with NAS and achieves superior performance under various attacks. However, they do not consider OoD generalization from the architecture perspective.

\section{Methodology}

\begin{figure*}
    \centering
    \includegraphics[width=1.0\linewidth]{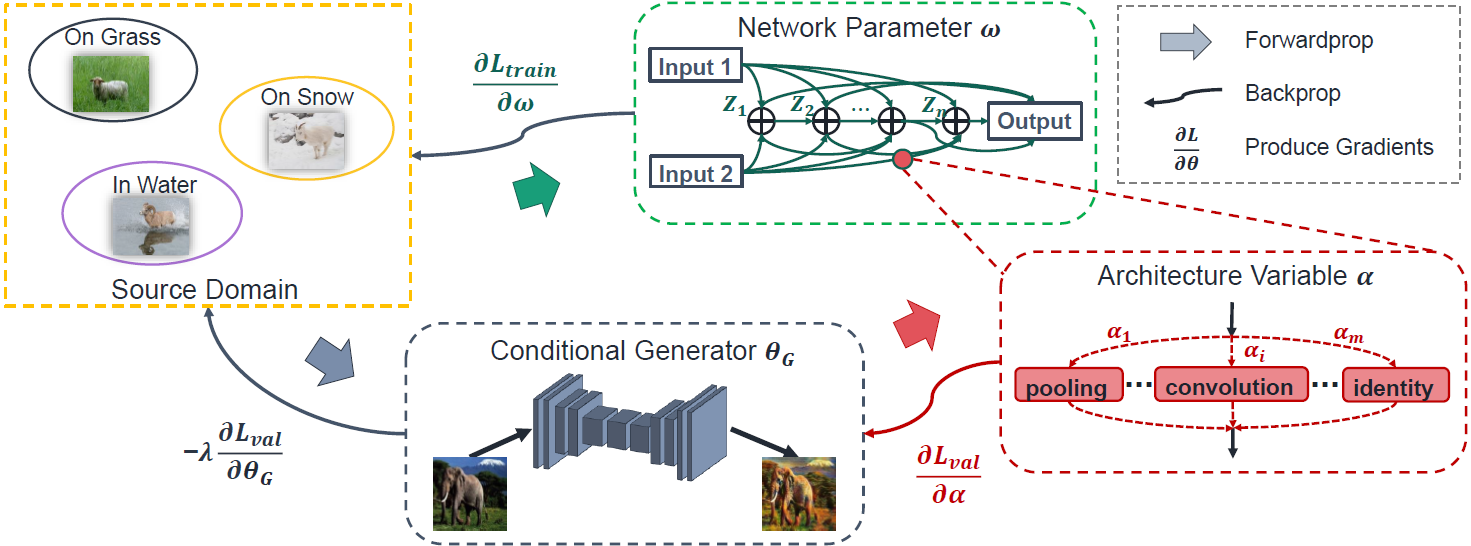}
    \caption
    {An overview of the proposed NAS-OoD. A conditional generator is learned to map the original training data to synthetic OoD examples by maximizing their losses computed by different neural architectures. Meanwhile, the architecture search process is optimized by minimizing the synthetic OoD data losses. 
    }
    \label{fig:nas-ood}
\end{figure*}

In this section, we first introduce preliminaries on conventional NAS and their limitations in OoD scenarios (Section~\ref{meth:nas}). Then, we describe the details of our robust Neural Architecture Search for OoD generalization (Section~\ref{meth:nas-ood}). 

\subsection{Preliminaries on Differentiable Neural Architecture Search}
\label{meth:nas}

Conventional NAS methods mainly search for computation cells as the building units to construct a network~\cite{liu2018darts, xie2018snas, hu2020dsnas}. The search space of a cell is defined as a directed acyclic graph with $n$ ordered nodes $\{z_1, z_2, ..., z_n\}$ and edges $\mathcal{\xi}=\{e^{i,j}|1 \le i < j \le n\}$. Each edge includes $m$ candidate network operations chosen from a pre-defined operation space $\mathcal{O} = \{o_1, o_2, ..., o_m\}$, such as max-pooling, identity and dilated convolution. The binary variable $s_{k}^{(i,j)} \in \{0, 1\}$ denotes the corresponding active connection. Thus, the node can be formed as:
\begin{equation}
z_j = \sum_{i=1}^{j-1}\sum_{k=1}^{m}s_k^{(i,j)}o_k(z_i) = \boldsymbol{s}_{j}^{T}\boldsymbol{o}_j,
\end{equation}
where $\boldsymbol{s}_{j}$ is the vector consists of $s_k^{(i,j)}$ and $\boldsymbol{o}_j$ denotes the vector formed by $o_k(z_i)$. As the binary architecture variables $s_j$ is hard to optimize in a differentiable way, recent DARTS-based NAS methods make use of the continuous relaxation in the form of
\begin{equation}
s_k^{(i,j)}=\text{exp}(\alpha_k^{(i,j)})/\sum_k\text{exp}(\alpha_k^{(i,j)}),
\end{equation}
and optimize $\alpha_k^{(i,j)}$ as trainable architecture parameters~\cite{liu2018darts}, which can be formulated as the following bilevel optimization problem:

\begin{equation}
\begin{aligned}
&\boldsymbol{\alpha}^* = \mathop{\arg \min}_{\boldsymbol{\alpha}} \ \ell_{\text{val}}(\boldsymbol{\omega}^{\ast}(\boldsymbol{\alpha}), \boldsymbol{\alpha}), \\
&\mathop{s.t.}\quad \boldsymbol{\omega}^{\ast}(\boldsymbol{\alpha}) = \mathop{\arg \min}_{\boldsymbol{\omega}}\ \ell_{\text{train}}(\boldsymbol{\omega}, \boldsymbol{\alpha}),
\end{aligned}
\end{equation}

where $\boldsymbol{\alpha}$ denotes the architecture variable vector formed by $\alpha_k^{(i,j)}$, and $\boldsymbol{\omega}$ denotes the supernet parameters. 
$\ell_{\text{train}}$ and $\ell_{\text{val}}$ denote the training and validation losses, respectively. In the search phase, $\boldsymbol{\alpha}$ and $\boldsymbol{\omega}$ are optimized in an alternate manner.

The validation data used for the above architecture search method are usually divided from the training data. Previous research demonstrates that the derived architectures perform well on different tasks~\cite{liu2018darts, xie2018snas, hu2020dsnas} when the training and test data are IID. 
However, when dealing with OoD tasks, where the test data come from another distribution, using part of the training set as the validation set may cause NAS methods suffer from over-fitting and the searched architectures to be sub-optimal in this situation. Thus, a proper validation set is needed to effectively evaluate the performance of discovered architectures on the test set in OoD scenarios.


\subsection{NAS-OoD: Neural Architecture Search for OoD Generalization}
\label{meth:nas-ood}

In OoD learning tasks, we are provided with $K$ source domains. The target is to discover the optimal network architecture that can generalize well to the unseen target domain. 
In the following descriptions, let $\boldsymbol{\alpha}$, $\boldsymbol{\omega}$ and $\boldsymbol{\theta}_{G}$ denote the parameters for architecture topology, the supernet and the conditional generator $G(\cdot,\cdot)$, respectively. The conditional generator $G(\cdot,\cdot)$ takes data $\boldsymbol{x}$ and domain labels $\boldsymbol{\tilde{k}}$ as the input. Let $\ell_{\text{train}}$ be the training loss function, and $\ell_{\text{val}}$ be the validation loss function.

To generate a proper validation set for OoD generalization in NAS, as shown in Figure~\ref{fig:nas-ood}, a conditional generator is learned to generate novel domain data by maximizing the losses on different neural architectures, while the optimal architecture variables are optimized by minimizing the losses on generated OoD images. This can be formulated as a constrained minimax optimization problem as follows:
\begin{equation}\label{eq:minmax}
\centering
\begin{aligned}
    &\mathop{\min}_{\boldsymbol{\alpha}} \mathop{\max}_{G}  \ell_{\text{val}}(\boldsymbol{\omega}^{\ast}(\boldsymbol{\alpha}), \boldsymbol{\alpha}, G(\boldsymbol{x},\boldsymbol{\tilde{k}})),\\ 
    &\text{s.t.} \quad
    \boldsymbol{\omega}^{\ast}(\boldsymbol{\alpha}) = \mathop{\arg \min}_{\boldsymbol{\omega}} \ell_{\text{train}}(\boldsymbol{\omega}, \boldsymbol{\alpha}, \boldsymbol{x}),
\end{aligned}
\end{equation}

where $G(\boldsymbol{x},\boldsymbol{\tilde{k}})$ is the generated data from the original input data $\boldsymbol{x}$ on the novel domain $\boldsymbol{\tilde{k}}$. This is different from NAS methods' formulation as we introduce a generator to adversarially generate challenging data from original input for validation loss to search for network architectures. This can avoid over-fitting problem by using the same data for optimizing neural network parameters and architectures as shown in our experiment. Solving this problem directly will involve calculating second-order derivatives that will bring much computational overhead and the constraint is hard to realize. Thus, we introduce the following practical implementations of our algorithm.

\begin{algorithm}[t]\small
\caption{NAS-OoD: Neural Architecture Search for OoD generalization}
\label{alg:nas-ood}
\begin{algorithmic}[1]
\REQUIRE Training set $\mathcal{D}$, batch size $n$, learning rate $\mu$. 
\ENSURE $\boldsymbol{\alpha}, \boldsymbol{\omega}, \boldsymbol{\theta}_G$.
\STATE Initialize $\boldsymbol{\alpha}$, $\boldsymbol{\omega}$, $\boldsymbol{\theta}_G$; 
\REPEAT
\STATE Sample a mini-batch of training images $\{(x_i, y_i)\}_{i=1}^{n}$;
\STATE Generate novel domain data: $x_i^{\text{syn}} \leftarrow \text{G} (x_i, \boldsymbol{\tilde{k}})$;
\STATE $\boldsymbol{\theta}_G \leftarrow \boldsymbol{\theta}_G - \mu \cdot \nabla_{\boldsymbol{\theta}_G} \ell_{\text{aux}}$ according to Eqn.~\eqref{eq:G};
\STATE $\boldsymbol{\omega} \leftarrow \boldsymbol{\omega} - \mu \cdot \nabla_{\boldsymbol{\omega}} \ell_{\text{train}}(\boldsymbol{\omega}, \boldsymbol{\alpha}, x_i)$ according to Eqn.~\eqref{eq:iterative};
\STATE $\boldsymbol{\theta}_G \leftarrow \boldsymbol{\theta}_G + \mu \cdot \nabla_{\boldsymbol{\theta}_G} \ell_{\text{val}}(\boldsymbol{\omega}, \boldsymbol{\alpha},x_i^{\text{syn}})$ according to Eqn.~\eqref{eq:iterative};
\STATE $\boldsymbol{\alpha} \leftarrow \boldsymbol{\alpha} - \mu \cdot \nabla_{\boldsymbol{\alpha}} \ell_{\text{val}}(\boldsymbol{\omega}, \boldsymbol{\alpha},x_i^{\text{syn}})$ according to Eqn.~\eqref{eq:iterative};
\UNTIL convergence;
\end{algorithmic}
\end{algorithm}

Inspired by the previous work in meta-learning~\cite{finn2017model}, we approximate the multi-step optimization with the one-step gradient when calculating the gradient for $\boldsymbol{\alpha}$. Different source domains are mixed together in architecture search, while the domain labels are embedded in the generator auxiliary loss training process which will be explained later. For the architecture search training process, architecture parameters $\boldsymbol{\alpha}$, network parameters $\boldsymbol{\omega}$ and parameters for conditional generator $\boldsymbol{\theta}_G$ are updated in an iterative training process: 
\begin{equation}\label{eq:iterative}
\begin{aligned}
&\boldsymbol{\omega} \leftarrow \boldsymbol{\omega} - \mu \cdot \nabla_{\boldsymbol{\omega}} \ell_{\text{train}}(\boldsymbol{\omega}, \boldsymbol{\alpha}, \boldsymbol{x}),\\
&\boldsymbol{\theta}_G \leftarrow \boldsymbol{\theta}_G + \mu \cdot \nabla_{\boldsymbol{\theta}_G} \ell_{\text{val}}(\boldsymbol{\omega}, \boldsymbol{\alpha}, G(\boldsymbol{x}, \boldsymbol{\tilde{k}})),\\
&\boldsymbol{\alpha} \leftarrow \boldsymbol{\alpha} - \mu \cdot \nabla_{\boldsymbol{\alpha}} \ell_{\text{val}}(\boldsymbol{\omega}, \boldsymbol{\alpha}, G(\boldsymbol{x},\boldsymbol{\tilde{k}})).
\end{aligned}
\end{equation}

To train the generator and improve consistency, we apply an additional cycle consistency constraint to the generator:
\begin{equation}\label{eq:cycle}
\ell_{\text{cycle}} = ||\text{G}(\text{G}(\boldsymbol{x}_{k}, \widetilde{k}), k)-\boldsymbol{x}_{k}||_{1},
\end{equation}
where $\boldsymbol{x}_{k}$ denotes data from $K$ source domains with domain $\{s_1, s_2, ..., s_K\}$, $\widetilde{k}$ denotes the domain index for the generated novel domain $s_{K+1}$, and $||\cdot||_1$ refers to L1 norm. This can regularize the generator to be able to produce data from and back to the source domains.

To preserve semantic information, we also require the generated data $\text{G}(\boldsymbol{x}_{k}, \widetilde{k})$ to keep the same category as the original data $\boldsymbol{x}_{k}$.
\begin{equation}\label{eq:ce}
\ell_{\text{ce}}  = \text{CE}(Y(\text{G}(\boldsymbol{x}_{k}, \widetilde{k})),Y^{\ast}(\boldsymbol{x}_{k})),
\end{equation}
where $\text{CE}$ be the cross-entropy loss, $Y$ is a classifier with a few convolutional layers pretrained on training data, $Y^{\ast}(\cdot)$ is the ground-truth labels for the input data.

The total auxiliary loss for generator is defined as follows:
\begin{equation}\label{eq:G}
\ell_{\text{aux}} = \lambda_{cycle} \cdot \ell_{\text{cycle}} + \lambda_{ce} \cdot \ell_{\text{ce}},
\end{equation}
where $\lambda_{ce}$ and $\lambda_{cycle}$ 
are hyper-parameters.

Compared with the gradient-based perturbation~\cite{shankar2018generalizing}, the conditional generator is able to model a more sophisticated distribution shift due to its intrinsic learnable nature.
The NAS-OoD algorithm is outlined in Algorithm ~\ref{alg:nas-ood}.

\section{Illustrative Results}

In this section, we conduct numerical experiments to evaluate the effectiveness of our proposed NAS-OoD method. 
To provide a comprehensive comparison with baselines, We compare our proposed NAS-OoD with the SOTA algorithms from various OoD areas, including empirical risk minimization (ERM~\cite{arjovsky2019invariant}),
invariant risk minimization (IRM~\cite{arjovsky2019invariant}),
risk extrapolation (REx~\cite{krueger2020outofdistribution}),
domain generalization by solving jigsaw puzzles (JiGen~\cite{carlucci2019domain}),
mixup (Mixup~\cite{zhang2017mixup}),
curriculum mixup (Cumix~\cite{mancini2020towards}),
marginal transfer learning (MTL~\cite{blanchard2017domain}),
domain adversarial training (DANN~\cite{ganin2016domain}), 
correlation alignment (CORAL~\cite{sun2016deep}),
maximum mean discrepancy (MMD~\cite{li2018domain}),
distributionally robust neural network (DRO~\cite{sagawa2019distributionally}),
convnets with batch balancing (CNBB~\cite{he2020towards}),
cross-gradient training (CrossGrad~\cite{shankar2018generalizing}), and the recently proposed
decomposed feature representation and semantic augmentation (DecAug~\cite{bai2020decaug}), etc. 
More details about the baseline methods are shown in the Appendix.

For ablation studies, We also compare NAS-OoD with SOTA NAS methods, such as differentiable architecture search (DARTS~\cite{liu2018darts}),
stochastic neural architecture search (SNAS~\cite{xie2018snas}),
efficient and consistent neural architecture search by sparse coding (ISTA-NAS~\cite{yang2020ista}). This is to test whether naively combining NAS methods with OoD learning algorithms can improve the generalization performance.

Our framework was implemented with PyTorch 1.4.0 and CUDA v9.0. We conducted experiments on NVIDIA Tesla V100. 
Following the design of~\cite{choi2018stargan}, our generator model has an encoder-decoder structure, which consists of two down-sampling convolution layers with stride 2, three residual blocks, and two transposed convolution layers with stride 2 for up-sampling. The domain indicator is encoded as a one-hot vector. The one-hot vector is first spatially expanded and then concatenated with the input image to train the generator. More implementation details can be found in the Appendix.

\subsection{Evaluation Datasets}
\label{exp:dataset}

We evaluate our NAS-OoD on four challenging OoD datasets: NICO Animal, NICO Vehicle, PACS, and Out-of-Distribution Fingerprint (OoD-FP) dataset where methods have to be able to perform well on data distributions different from training data distributions. The evaluation metric is the classification accuracy of the test set. The number of neural network parameters is used to measure the computational complexity for comparison between different neural network architectures.

NICO (Non-I.I.D. Image with Contexts) datasets consists of two datasets, i.e., NICO Animal with 10 classes and NICO Vehicle with 9 classes. 
The NICO dataset is a recently proposed OoD generalization dataset in the real scenarios~\cite{he2020towards}, (see Figure~\ref{fig:imgnico}), 
which contains different contexts, such as different object poses, positions, and backgrounds across the training, validation, and test sets.

PACS (Photo, Art painting, Cartoon, Sketch) dataset is commonly used in OoD generalization. (see Figure~\ref{fig:imgpacs}).
It contains four domains with different image styles, namely photo, art painting, cartoon, and sketch with seven categories (dog, elephant, giraffe, guitar, horse, house, person). We follow the same leave-one-domain-out validation experimental protocol as in~\cite{Li2017}, i.e., we select three domains for training and the remaining domain for testing for each time.

\begin{figure}[t]
    \centering
\includegraphics[width=0.7\linewidth]{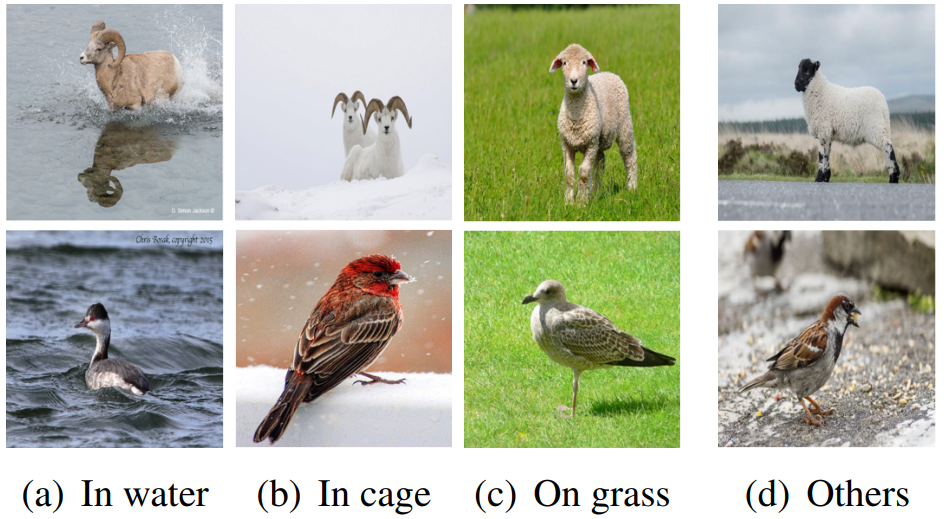}
    \caption{Examples of the out-of-distribution data from the NICO dataset with contexts (a) In water, (b) On snow, (c) On grass, and(d) Others.}
    \label{fig:imgnico}
\end{figure}

\begin{figure}[t]
    \centering
    \includegraphics[width=0.7\linewidth]{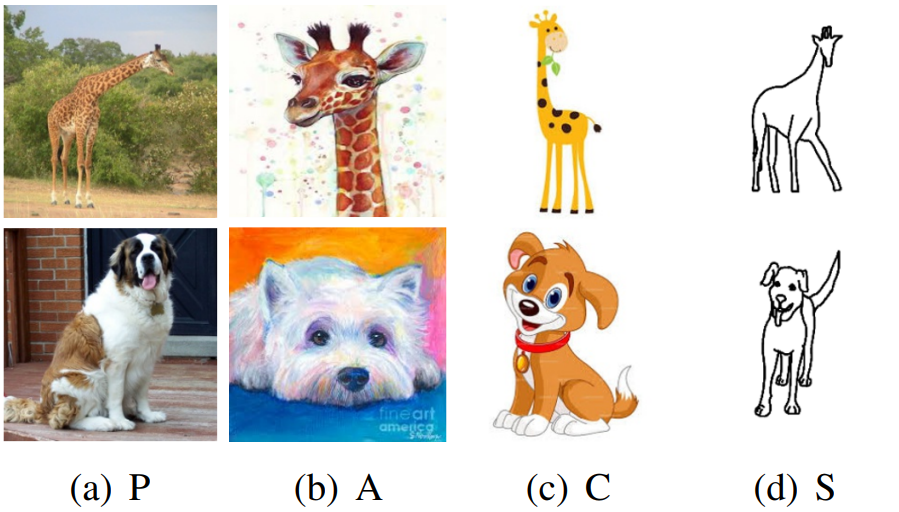}
    \caption{Typical examples of out-of-distribution data with diversity shift from the PACS dataset. (a) Photo. (b) Art Painting. (c) Cartoon. (d) Sketch. 
    }
\label{fig:imgpacs}
\end{figure}

\begin{figure}[t]
    \centering
    \includegraphics[width=0.7\linewidth]{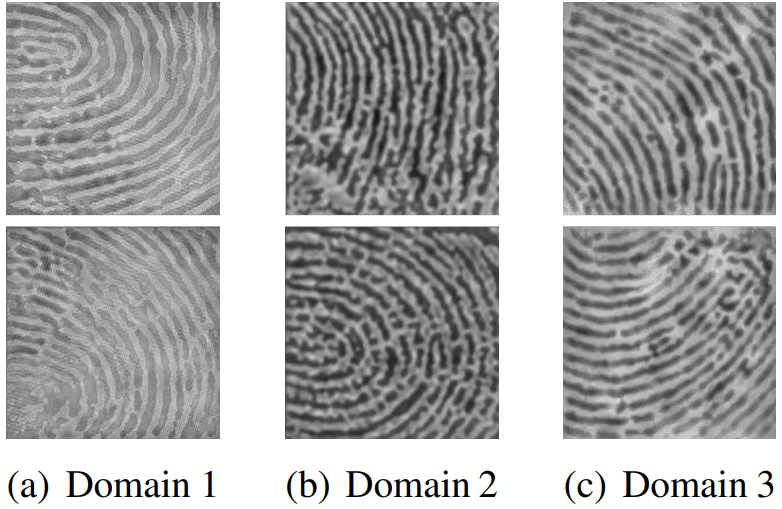}
    \caption{Typical examples of out-of-distribution data in the OoD-FP dataset with three different domains.}
    \label{fig:imgFP}
\end{figure}

OoD-FP (Out-of-Distribution Fingerprint) dataset is a real industry dataset that contains three domains corresponding to different fingerprint collection devices on different brands of mobile phones. (see Figure~\ref{fig:imgFP}). 
In the fingerprint recognition task, the goal is to learn to distinguish whether input fingerprints are from the users' fingerprints stored in the dataset. Due to the hardware implementation differences, fingerprints exhibit different styles from different devices. In our setting, the goal is to learn a universal fingerprint recognition neural network to generalize on the fingerprints collected from unseen datasets.

\begin{table}[t]
\centering
\caption{Results of NAS-OoD compared with different methods with ResNet-18 (11.7M) on the NICO dataset.}
\label{table:nico-nas-ood}
\begin{adjustbox}{max width=0.95\textwidth}
\begin{threeparttable}
\begin{tabular}{lcc|c}
\toprule
\toprule
Model & Animal & Vehicle & Average \\
\midrule
ERM~\cite{arjovsky2019invariant}	               &75.87   &74.52   &75.19  \\
IRM~\cite{arjovsky2019invariant}	               &59.17   &62.00   &60.58	 \\
REx~\cite{krueger2020outofdistribution}            &74.31   &66.20   &70.25  \\
JiGen~\cite{carlucci2019domain}                    &84.95   &79.45   &82.20  \\
Mixup~\cite{zhang2017mixup}\tnote{*}	           &80.27   &77.00   &78.63	 \\
Cumix~\cite{mancini2020towards}                    &76.78   &74.74   &75.76  \\
MTL~\cite{blanchard2017domain}\tnote{*}            &78.89   &75.11   &77.00	 \\
MMD~\cite{li2018domain}\tnote{*}                   &70.91   &68.04   &69.47  \\
CNBB~\cite{he2020towards}                          &78.16   &77.39   &77.77  \\
DecAug~\cite{bai2020decaug}                        &85.23   &80.12   &82.67  \\
\midrule
\emph{NAS-OoD} &\emph{\textbf{88.72}} &\emph{\textbf{81.59}} &\emph{\textbf{85.16}}\\
\midrule
Params (M) & 3.25 & 3.00 & 3.13
\\
\bottomrule
\bottomrule
\end{tabular}
\begin{tablenotes}
	\item[*] Implemented by ourselves.
\end{tablenotes}
\end{threeparttable}
\end{adjustbox}
\vspace{-10pt}
\end{table}

\subsection{Results and Discussion}
\label{exp:discuss}
NAS-OoD achieves the SOTA performance \emph{simultaneously} on various datasets from different OoD research areas, such as stable learning, domain generalization, and real industry dataset.

\begin{table}[t]
\centering
\caption{Classification accuracy on the PACS dataset compared with different methods with ResNet-18 (11.7M).}
\label{table:pacs_pretrain}
\begin{adjustbox}{max width=0.48\textwidth}
\begin{threeparttable}
\begin{tabular}{lcccc|c}
\toprule
\toprule
Model  &A  &C  &S  &P  &Average\\
\midrule
ERM~\cite{arjovsky2019invariant}              &77.85   &74.86   &67.74   &95.73   &79.05  \\
IRM~\cite{arjovsky2019invariant}              &70.31   &73.12   &75.51   &84.73   &75.92  \\
REx~\cite{krueger2020outofdistribution}       &76.22   &73.76   &66.00   &95.21   &77.80  \\
JiGen~\cite{carlucci2019domain}               &79.42   &75.25   &71.35   &96.03   &80.51  \\
Mixup~\cite{zhang2017mixup}\tnote{*}          &82.01   &72.58   &72.48   &93.29   &80.09  \\
CORAL~\cite{sun2016deep}\tnote{*}             &80.49   &74.32   &75.06   &94.09   &80.99  \\
MMD~\cite{li2018domain}\tnote{*}	          &79.34   &73.76   &72.61   &94.19   &79.97  \\
L2A-OT~\cite{zhou2020learning} &83.30 &78.20 &73.60 &96.20 &82.80 \\
DecAug~\cite{bai2020decaug}                   &79.00   &79.61   &75.64   &95.33   &82.39  \\
\midrule
\emph{NAS-OoD} &\emph{\textbf{83.74}} &\emph{\textbf{79.69}} &\emph{\textbf{77.27}} &\emph{\textbf{96.23}} &\emph{\textbf{84.23}} \\
\midrule
Params (M) & 3.51 & 3.44 & 3.35 & 3.15 & 3.36 \\
\bottomrule
\bottomrule
\end{tabular}
\begin{tablenotes}
	\item[*] Implemented by ourselves.
\end{tablenotes}
\end{threeparttable}
\end{adjustbox}
\vspace{-10pt}
\end{table}

The results for the challenging NICO dataset are shown in Table~\ref{table:nico-nas-ood}. From Table~\ref{table:nico-nas-ood}, the proposed NAS-OoD method achieves the SOTA performance simultaneously on the two subsets of the NICO dataset with a much fewer number of parameters. Specifically, NAS-OoD achieves $88.72\%$ on NICO Animal and $81.59\%$ on NICO Vehicle with only around $3.1$ million parameters compared with DecAug achieving $82.67\%$ accuracy but with $11.7$ million parameters. The superior performance of NAS-OoD also confirms the possibility of improving the neural network's OoD generalization performance by searching for the architecture, which provides an orthogonal way to improve the OoD generalization.

We also compare our methods with different domain generalization methods on the PACS dataset. The results are shown in Table~\ref{table:pacs_pretrain}. Similarly, we observe that NAS-OoD achieves SOTA performance on all the four domains and the best average generalization performance of $83.89\%$ with only $3.36$ million of network parameters. The generalization accuracy is much better than previous OoD algorithms DecAug ($82.39\%$), JiGen ($80.51\%$), IRM ($75.92\%$) with ResNet-18 backbone, which are the best OoD approaches before NAS-OoD. The network parameters for ResNet-18 is $11.7$ million which is much larger than the network searched by our NAS-OoD. Note that the relative performance for some algorithms may change drastically between NICO and PACS datasets whereas the proposed NAS-OoD algorithm can generalize well simultaneously on datasets from different OoD research areas.

To test the generalization performance of NAS-OoD on real industry datasets, we compare NAS-OoD with other methods on OoD-FP dataset. The results are shown in Table~\ref{table:finger}. NAS-OoD consistently achieves good generalization performance with the non-trivial improvement compared with other methods. NAS-OoD achieves a $1.23\%$ error rate in the fingerprint classification task which almost reduces the error rate by around $70\%$ compared with the second-best method--MMD. This demonstrates the superiority of NAS-OoD and especially its potential to be practically useful in real industrial applications.

\begin{table}[t]
\centering
\caption[Classification accuracy compared to different methods with ResNet-18 backbone (11.7M) on the OoD-FP dataset.]{Classification accuracy compared to different methods with ResNet-18 backbone (11.7M) on the OoD-FP dataset. All methods are implemented by ourselves.
}
\label{table:finger}
\begin{adjustbox}{max width=0.48\textwidth}
\begin{tabular}{lccc|c}
\toprule
\toprule
Model  & Domain 1  & Domain 2  & Domain 3   & Average\\
\midrule
ERM~\cite{arjovsky2019invariant}        &93.75  &92.70  &92.70  &93.05  \\
IRM~\cite{arjovsky2019invariant}        &95.83  &87.50  &84.37  &89.23  \\
REx~\cite{krueger2020outofdistribution} &97.91  &91.66	&92.70  &94.09  \\
Mixup~\cite{zhang2017mixup}             &96.87  &97.91  &90.62	&95.13  \\
MTL~\cite{blanchard2017domain}          &95.83	&97.91  &90.62  &94.78  \\
DANN~\cite{ganin2016domain}             &95.83	&97.91	&86.45  &93.39  \\
CORAL~\cite{sun2016deep}                &94.79  &97.91  &91.66	&94.78  \\
MMD~\cite{li2018domain}              	&96.87  &95.83  &94.79	&95.83  \\
\midrule
\emph{NAS-OoD} &\emph{\textbf{99.27}}  &\emph{\textbf{99.49}} &\emph{\textbf{97.54}} &\emph{\textbf{98.77}} 
\\
\midrule
Params (M) & 2.28 & 2.28 & 2.43 & 2.33
\\
\bottomrule
\bottomrule
\end{tabular}
\end{adjustbox}
\vspace{-10pt}
\end{table}

\begin{table}[t]
\centering
\caption[Results of NAS-OoD compared with other NAS methods.]{Results of NAS-OoD compared with other NAS methods. The baselines are implemented by ourselves.}
\label{table:niconas}
\begin{adjustbox}{max width=0.95\textwidth}
\begin{threeparttable}
\begin{tabular}{lcc|c}
\toprule
\toprule
Model &Animal &Vehicle &Average \\
\midrule
DARTS~\cite{liu2018darts}  & 83.67  &75.55 & 79.61 \\
DARTS + IRM~\cite{liu2018darts}	& 82.29 & 72.24 & 77.26 \\
SNAS~\cite{xie2018snas}  & 85.96& 80.04 & 83.00 \\
SNAS + IRM~\cite{xie2018snas}	 & 82.94 & 76.51 & 79.73 \\
ISTA-NAS~\cite{yang2020ista} & 86.70 & 80.56 & 83.63\\
ISTA-NAS + IRM~\cite{yang2020ista} & 82.57 & 77.61 & 80.09 \\
\midrule
\emph{NAS-OoD} & \emph{\textbf{88.72}} & \emph{\textbf{81.59}}&\emph{\textbf{85.16}}\\
\bottomrule
\bottomrule
\end{tabular}
\end{threeparttable}
\end{adjustbox}
\end{table}

\begin{table}[t]
\centering
\caption{NAS-OoD variants.}
\label{table:variants}
\begin{adjustbox}{max width=0.9\textwidth}
\begin{threeparttable}
\begin{tabular}{lcc|c}
\toprule
\toprule
Model &Animal &Vehicle &Average \\
\midrule
Random Sample &80.92 &76.43 &78.68 \\
NAS-OoD w/o cycle loss     &86.88 &80.85 &83.86 \\
\midrule
\emph{NAS-OoD} & \emph{\textbf{88.72}} & \emph{\textbf{81.59}}&\emph{\textbf{85.16}}\\
\bottomrule
\bottomrule
\end{tabular}
\end{threeparttable}
\end{adjustbox}
\end{table}

\subsection{Ablation Study}
\label{exp:abla}

In this section, we first test whether naively combining NAS methods with domain generalization methods can achieve good OoD generalization performances. We conduct experiments on the NICO dataset. 
The results are shown in Table~\ref{table:niconas}. It can be seen that using NAS methods only, such as DARTS, can achieve only $79.61\%$ average accuracy, significantly lower than most compared compositions. This is in stark contrast with the good performance for NAS methods on IID generalization tasks where training and test datasets have similar distributions. This is because NAS methods are doing variational optimization by finding not only the best parameters but also the best function for fitting whereas this can help NAS methods to achieve good performance in IID settings. In OoD settings, where test data distributions differ significantly from training data distributions, NAS methods can overfit the training data distribution and achieve sub-optimal performance. Besides, we can also observe that naively combining the NAS methods with OoD learning algorithms, such as IRM, brings no statistically significant performance gain. The reason is that many OoD learning algorithms are based on implicit or explicit regularization added to the ERM loss. NAS methods will explore the search space to fit the loss terms and the regularization term may be ignored as NAS methods may exploit only the ERM loss, thus bringing no performance gain. This also confirms that generating OoD data is needed during training to avoid over-fitting.

As shown in Table~\ref{table:variants}, the average results of randomly sampled architectures are 79.45\% (Animal) and 75.70\% (Vehicle), which is significantly lower than those of the architectures searched by NAS-OoD. We also conduct an ablation study on this auxiliary loss, the average accuracy on NICO without cycle loss is 83.86\%, which is low than our proposed method 85.16\%. This shows the effectiveness of the auxiliary cycle loss to facilitate the searching process.

\subsection{Analysis of Searched Architectures}
After setting up the NAS-OoD framework, we want to analyze whether any special patterns for searched cell-based network architectures and whether the NAS-OoD framework can stably find consistent architectures.

To check whether the found special pattern is consistent during the training process, we plot the operation type's percentage during the training process in Figure~\ref{fig:multiprocess}. In Figure~\ref{fig:multiprocess}, we can find that as the training proceeds, the architecture found by NAS-OoD is converging to the pattern that the percentage of dilated convolution $3\times3$ is higher and the separable convolution $3\times3$ is lower. This might be because dilated convolution has a larger receptive field than separable convolution which only receives one channel for each convolution kernel and a large receptive field can better learn the shape features of objects rather than the spurious features, such as color and texture.

\begin{figure}[!t]
    \centering
    \includegraphics[width=0.7\linewidth]{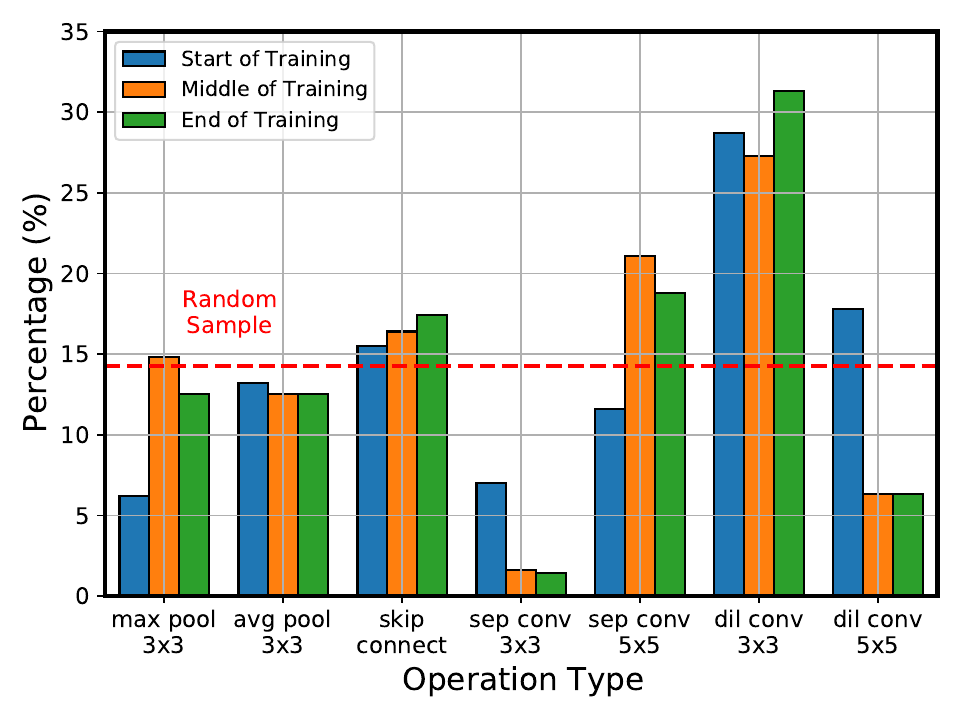}
    \caption[Temporal stability of search architecture.]{Temporal stability of search architecture.(Better viewed in the zoom-in mode)}
    \label{fig:multiprocess}
\end{figure}

\begin{figure}[!t]
    \centering
    \includegraphics[width=0.7\linewidth]{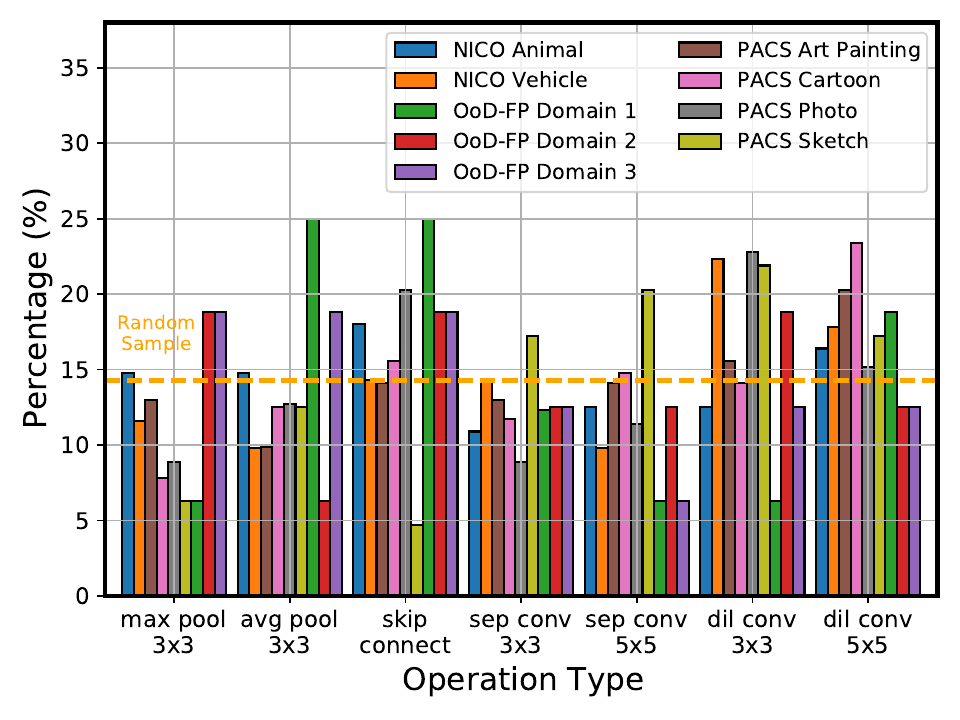}
    \caption[Statistical analysis of searched architectures on different datasets.]{Statistical analysis of searched architectures on different datasets.(Better viewed in the zoom-in mode)}
    \label{fig:multidatasets}
\end{figure}

To check whether the architecture patterns searched by NAS-OoD are similar across different datasets, we plot the operation type's percentage for different datasets in Figure~\ref{fig:multidatasets}.

We found there are similarities of architectures found on different datasets. Specifically, the searched NAS-OoD architectures tend to contain more convolutional operations with a large kernel size compared with randomly sampled architectures. This may be because a larger kernel size convolution operation has larger receptive fields, which makes better use of contextual information compared with a small kernel size. NAS-OoD architectures also present more skip connection operations compared with random selection and locate on both skip edges and direct edges, which can better leverage both low-level texture features and high-level semantic information for recognition. 
There is some previous study show that densely connected pattern benefits model robustness.
NAS-OoD searched for more dilated convolutions than normal convolutions, which may be due to that the dilated convolutions enlarge the receptive fields.

\begin{figure}[!t]
    \centering
    \includegraphics[width=0.47\linewidth]{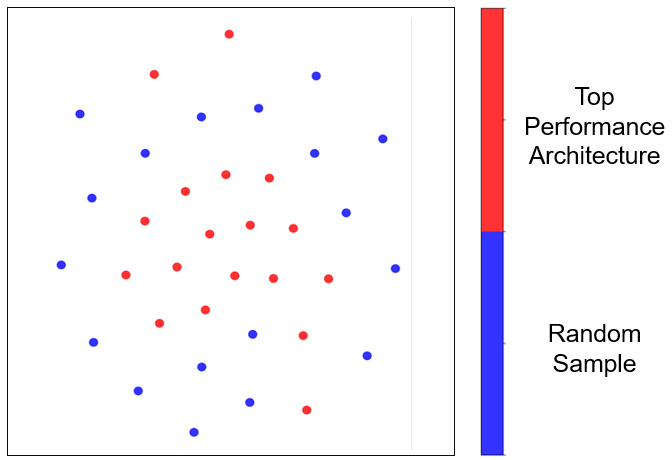}
    \caption{t-SNE visualization of architecture topology.}
    \label{fig:t-SNE-nas}
\end{figure}

We do t-SNE visualization to visualize the embedding of the topology of $\alpha$ in the two-dimensional space to see whether the top performance architecture has different patterns compared with average architectures. We randomly selected nine top performance architectures found by NAS-OoD and compared them with other randomly selected architectures. The results are shown in Figure~\ref{fig:t-SNE-nas}. The top performance architecture is significantly different from random architectures. This demonstrates that there are indeed special architecture patterns that can help the deep neural network to generalize well on OoD settings.

\begin{figure}[!t]
    \centering
    \includegraphics[height = 0.36\linewidth]{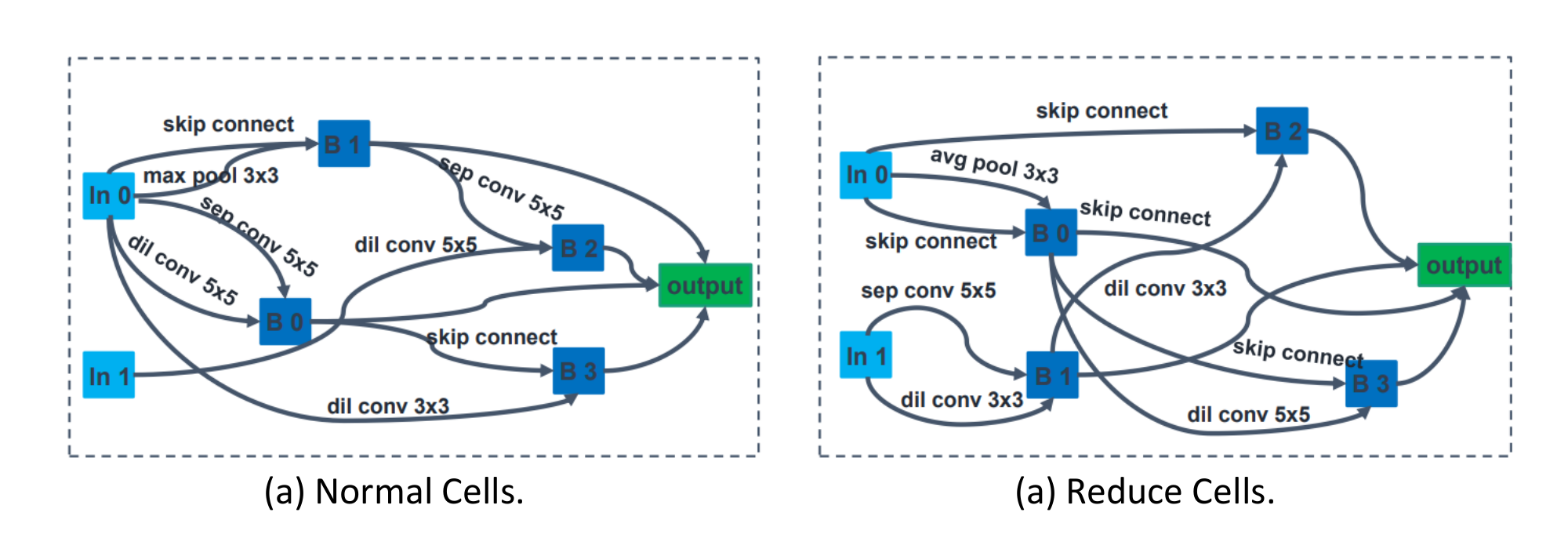}
    \caption[Typical examples of searched robust architectures.]{Typical examples of searched robust architectures on NICO dataset.(Better viewed in the zoom-in mode)}
    \label{fig:nicocell}
\end{figure}

\begin{figure}[!t]
    \centering
    \includegraphics[width=0.58\linewidth]{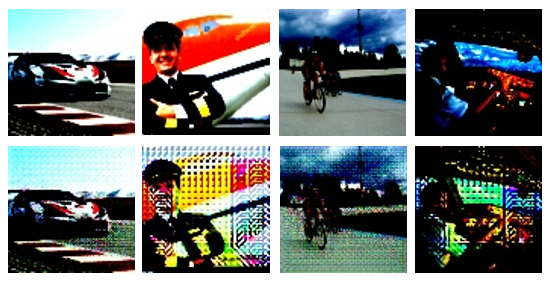}
    \caption{Some examples of synthetic images. The first row shows the original images, and the second row is its corresponding synthetic images.}
    \label{fig:syn}
\end{figure}

As illustrated in Figure~\ref{fig:nicocell}, we present the detailed structures of the best cells discovered on different datasets using NAS-OoD. Figure~\ref{fig:nicocell} (a) show the normal cells and (b) demonstrate the reduce cells. The searched cell contains two input nodes, four intermediate nodes, and one output node. Each intermediate node has two edges to the previous nodes which consist of both direct edge and skip edge, and the operation is presented on each edge. Besides, all the intermediate nodes are connected and aggregated to the output node.

We also visualize the generated OoD data. in Figure~\ref{fig:syn}. 
We find that the generated images show different properties and are clearly different from the source images. The conditional generator tends to generate images with different background patterns, textures, and colors. The semantic different make them helpful for improving out-of-distribution generalization ability.

\chapter{Conclusion}

In this thesis, we have provided a comprehensive and systematic framework to understand distribution shifts. We analyzed two-dimensional OoD shifts (i.e., correlation shift and diversity shift) from different perspectives: data augmentation and neural network architectures. We find that while existing OoD algorithms results are not consistent across datasets including ERM, learned data augmentation and searched robust architectures do better than baselines methods over various distribution shifts.

Firstly, we propose DecAug, a novel decomposed feature representation and semantic augmentation method for various OoD generalization tasks. High-level representations for the input data are decomposed into category-related and context-related features to deal with the diversity shift between training and test data. Gradient-based semantic augmentation is then performed on the context-related features to break the spurious correlation between context features and image categories. To the best of our knowledge, this is the first method that can simultaneously achieve the SOTA performance on various OoD generalization tasks from different research areas, indicating a new research direction for OoD generalization research. 

Then, we propose a robust neural architecture search framework that is based on differentiable NAS to understand the importance of network architecture against Out-of-Distribution robustness. We jointly optimize NAS and a conditional generator in an end-to-end manner. The generator is learned to synthesize OoD instances by maximizing their losses computed by different neural architectures, while the goal of the architecture search is to find the optimal architecture parameters that minimize the synthesized OoD data losses. Our study presents several valuable observations on designing robust network architectures for OoD generalization.

The research field of out-of-distribution generalization is still evolving. We discuss some possible further research directions below:
    Firstly, investigating relaxing the assumption for different environments of OoD generalization methods is a valuable direction remains to be done.
    In addition, improving the feature representation learning and disentanglement by leveraging the latent heterogeneity inside data that can perform simultaneously well under different types of distribution shifts.
    Last but not the least, investigating the robustness of the recent emergence of vision transformer model, and understand the role of self-attention mechanism is also a promising direction which has potential to strengthen the search space for NAS-OoD.

\newpage
\addcontentsline{toc}{chapter}{Reference}
\bibliographystyle{IEEEtranN}
\bibliography{main}

\end{document}